\mathchardef\mhyphen="2D 
\newcommand{\eg}{\textit{e.g.}}
\newcommand{\ie}{\textit{i.e.}}
\DeclareMathOperator*{\bigO}{\mathcal{O}}
\DeclareMathOperator*{\minimize}{minimize}
\DeclareMathOperator*{\argmin}{\arg\min}
\DeclareMathOperator*{\VP}{\mathbf{VP}}
\DeclareMathOperator*{\polylog}{\text{polylog}}
\newtheorem{lemma}{Lemma}
\newtheorem{proposition}{Proposition}
\newtheorem{theorem}{Theorem}
\newtheorem{problem}{Problem}
\newtheorem{definition}{Definition}
\title{\LARGE\bf Algorithms for Visibility-Based Monitoring with Robot Teams}%
\author{Pratap Tokekar, Ashish Kumar Budhiraja and Vijay Kumar%
\thanks{P. Tokekar and A. Budhiraja are with the Department of Electrical \& Computer Engineering, Virginia Tech, Blacksburg VA, USA. \texttt{\small \{tokekar, ashishkb\}@vt.edu}}
\thanks{V. Kumar is with the Department of Mechanical Engineering
  and Applied Mechanics, University of Pennsylvania, Philadelphia, PA,
  USA. \texttt{\small kumar@seas.upenn.edu}.}%
\thanks{This material is based upon work supported in part by the National Science Foundation under Grant No. 1566247, ONR grant N00014-09-1-1051, NIFA grant 2015-67021-23857.}}
\begin{document}
\maketitle

\begin{abstract}
We study the problem of planning paths for a team of robots for visually monitoring an environment. Our work is motivated by surveillance and persistent monitoring applications. We are given a set of target points in a polygonal environment that must be monitored using robots with cameras. The goal is to compute paths for all robots such that every target is visible from at least one path. In its general form, this problem is NP-hard as it generalizes the Art Gallery Problem and the Watchman Route Problem. We study two versions: (i) a geometric version in \emph{street polygons} for which we give a polynomial time $4$--approximation algorithm; and (ii) a general version for which we present a practical solution that finds the optimal solution in possibly exponential time. In addition to theoretical proofs, we also present results from simulation studies.
\end{abstract}

\section{Introduction} \label{sec:intro}
We study the problem of planning paths for a team of robots tasked with visually monitoring complex environments.
Visibility-based monitoring problems commonly occur in many applications such as surveillance, infrastructure
inspection~\cite{ozaslaninspection}, and environmental monitoring~\cite{smith2011persistent}. These problems have received
significant interest recently~\cite{smith2012persistent,yu2014correlated,lan2013planning}, thanks in part, to the technological advances that have made it easy to rapidly deploy teams of robots capable of performing such tasks. For example, Michael et al.~\cite{michael2011persistent} demonstrated the feasibility of carrying out persistent monitoring tasks with a team of Unmanned Aerial Vehicles (UAVs) with onboard
cameras.

\begin{figure}[htb]
\centering{ 
\includegraphics[width=0.9\columnwidth]{./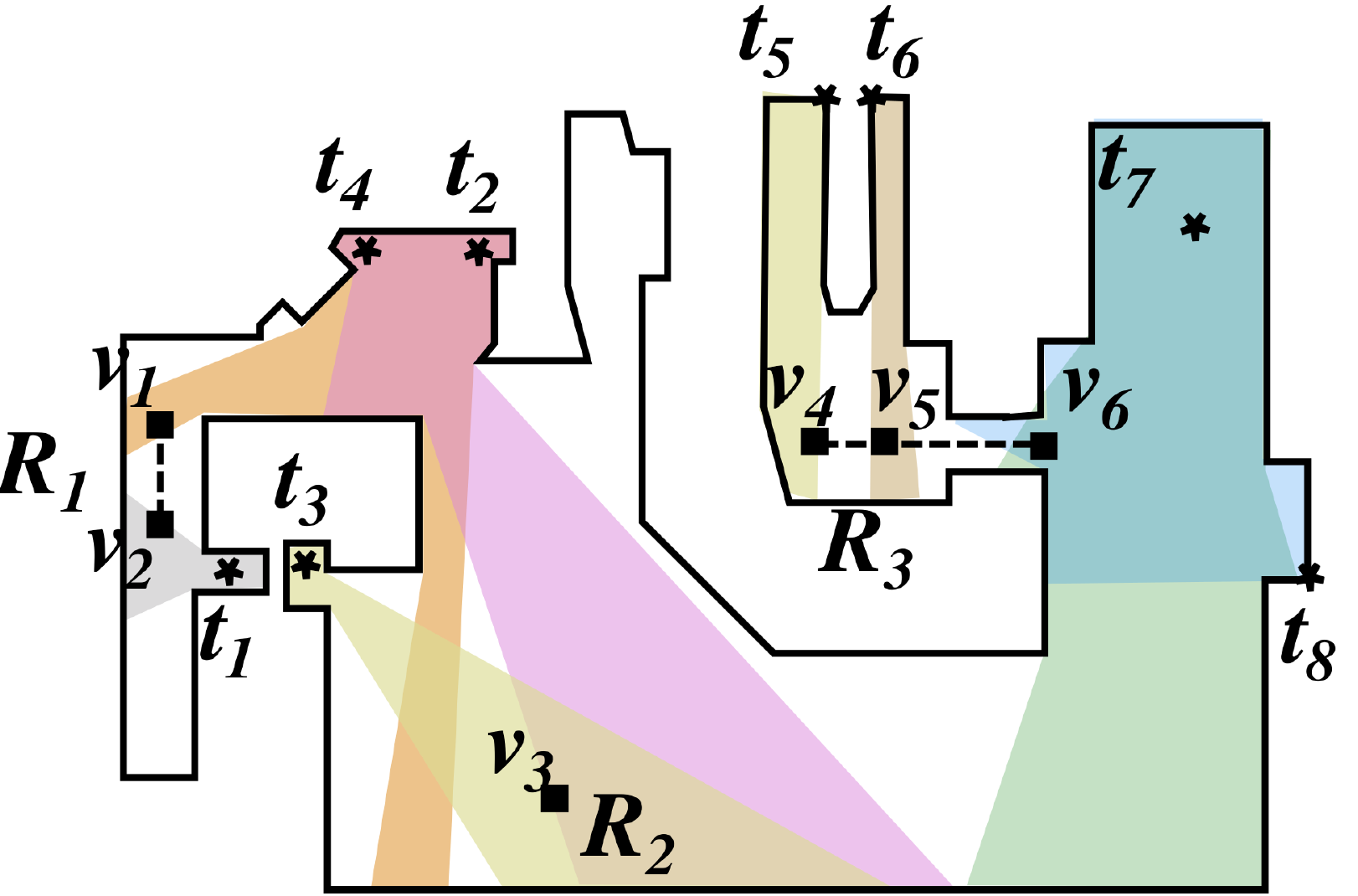}
\caption{Problem formulation. We are given a set of target points ($\star$) in a polygonal environment. Our goal is to find $m$ paths
   and discrete viewpoints on the paths ($\blacksquare$) such that each target is seen from at least one viewpoint. Here, $R_1$ sees $t_1$ and $t_2$ along its path from $v_1 \rightarrow v_2$; $R_2$ remains stationary at $v_3$ from where it sees $t_3$ and $t_4$; and $R_3$ sees $t_5$--$t_8$ from its path $v_4 \rightarrow v_5 \rightarrow v_6$. The cost of a path is a weighted combination of its length (travel time) and the number of viewpoints (measurement time). A path may consist of only one point (as is the case for $R_2$). The objective is to minimize the maximum path cost.\label{fig:motivation}}}
\end{figure}

Persistent monitoring problems are typically studied when the points of interest are given as input. The points may have associated weights representing their importance. A common objective is to find the order of visiting the points that minimizes the weighted latency. Alamdari et al.~\cite{alamdari2014persistent} showed that this problem is NP-hard and presented two $\log$ factor approximation algorithms. In many settings, the path to be followed by the robots is given as input as well, and the speed of the robot must be optimized to minimize the maximum weighted latency. Cassandras et al.~\cite{cassandras2013optimal} presented an optimal control approach to determine the speed profiles for multiple robots when their motion is constrained to a given curve. Yu et al.~\cite{yu2014persistent} presented an optimal solution for computing speed profiles for a single robot moving along a closed curve to sense the maximum number of stochastically arriving events on a curve. Pasqualetti et al.~\cite{pasqualetti2012cooperative} presented distributed control laws for coordination between multiple robots patrolling on a metric graph.

We consider a richer version of the problem where the points to be visited by the robots are not given, and instead must be computed based on visibility-based sensing. We are given a set of target points in a polygonal environment. Each robot carries a camera and can see any target as long as the straight line joining them is not obstructed by the boundary of the environment. Our goal is to compute paths for $m$ robots, so as to ensure that each target is seen from at least one point on some path. Figure~\ref{fig:motivation} shows an example scenario for $m=3$.

	Our problem is a generalization of the Art Gallery Problem (AGP)~\cite{o1987art} and the Watchman Route Problem (WRP)~\cite{chin1988optimum}. The objective in AGP is to find the smallest set of ``guard'' locations, such that every point in an input polygon is seen from at least one guard. AGP is NP-hard for most types of input polygons~\cite{o1987art}, and very few approximation algorithms exist even for special cases. The objective in WRP is to find a tour of minimum length for a single robot (i.e., watchman) so as to see every point in an input polygon. There is an optimal algorithm for solving WRP in polygons without any holes~\cite{carlsson1999finding} and a $\bigO(\log^2 n)$ approximation algorithm for $n$-sided polygons with holes~\cite{mitchell2013approximating}. Carlsson et al~\cite{carlsson1999finding} introduced $m$--WRP where the goal is to find $m$ tours such that each point in the environment is seen from at least one tour. The objective is to minimize the total length of $m$ tours. They showed that the problem is NP-hard (in fact, no approximation guarantee is possible).

Using the length of a tour as the cost is reasonable when a robot is capable of obtaining images as it is moving. However, in practice, obtaining high-resolution images while moving may lead to motion blur or cause artifacts to appear due to rolling shutter cameras. This is especially the case when UAVs are to be used. It would be desirable for the robot to stop to obtain a measurement. Instead of finding a continuous path, we would like to find a set of discrete viewpoints on $m$ paths. The cost of a path can be modeled as the weighted sum of the length of the path (travel time) and the number of measurements along the path (measurement time). Wang et al.~\cite{wang2010generalized} first introduced this objective function for WRP for the case of a single robot and termed it the Generalized WRP (GWRP). They showed that GWRP is NP-hard and presented a $\bigO(\polylog n)$ approximation for the restricted case when each viewpoint is required to see a complete polygon edge.

We introduce the $m$ robot version of GWRP. This problem, in general, is NP-hard since it generalizes the NP-hard problems of GWRP and $m$-WRP. Hence, we consider special instances of the problem and present a number of positive results. 
In particular, we characterize the conditions under which the problem has an optimal algorithm (Section~\ref{sec:dynprog}) and a present a constant-factor approximation algorithm for a special class of environments (Section~\ref{sec:street}). In addition to theoretical analysis, we perform simulations to study the effect of the number of robots and targets on the optimal cost.

Finally, we study the case when the measurement time is zero (Section~\ref{sec:general}). We present a practical solution that finds paths for $m$ robots. Our solution can be applied for a broad class of environments (\eg, 2.5D, 3D) and can incorporate practical sensing constraints (\eg, limited sensing range and field-of-view). The added generality comes at the expense of running time. Instead of a polynomial time solution, our algorithm may take possibly exponential time. We show how to use existing, sophisticated Traveling Salesperson Problem solvers to produce solutions in reasonable amounts of time (for typical instances). 

This paper builds on the work presented at~\cite{tokekar2015visibility} which did not include the general solution in Section~\ref{sec:general}.  We formally state the various problems considered and our contributions in the following section.

\section{Problem Formulation} \label{sec:probform}
We study three problems for visibility-based persistent monitoring. The environment $P$ is an $n$-sided 2D polygon without holes (in one of the problems we consider a 1.5D terrain\footnote{A 1.5D terrain is defined by a continuous (but not necessarily differentiable) function, $f(x)$, which can be interpreted as the height of the terrain at a coordinate $x$. Figure~\ref{fig:terrain} gives an example.} environment). We are given a set of target points, $X$, within $P$. We have $m$ robots each carrying an omnidirectional camera. Let each robot travel with unit speed and let the time to obtain an image be $t_m$. Let $\Pi_i$ denote both the $i^{th}$ path and $V_i$ denote the discrete set of viewpoints along this path. Let $\Pi$ denote the collection of $m$ paths. Let $\VP(p)$ denote the visibility polygon of a point $p$ in $P$ and $\VP(V_i)$ denote the union of visibility polygons of all viewpoints on $\Pi_i$. We relax many of these assumptions for the general case in Section~\ref{sec:general}.



In the first problem, we are given a curve in the polygon along which we must determine the set of viewpoints. The curve can be, for example, the boundary of the environment for border patrolling, or a safe navigation path within the environment. The goal is to find $m$ paths along this curve along with viewpoints on the paths, to see every point in $X$. We show that if the set $X$ and the curve satisfy a property, termed \emph{chain-visibility}, then this problem can be solved optimally.

\begin{figure*}[thb]
\centering{
  \subfigure[\label{fig:shortest}]{\includegraphics[width=0.25\textwidth]{./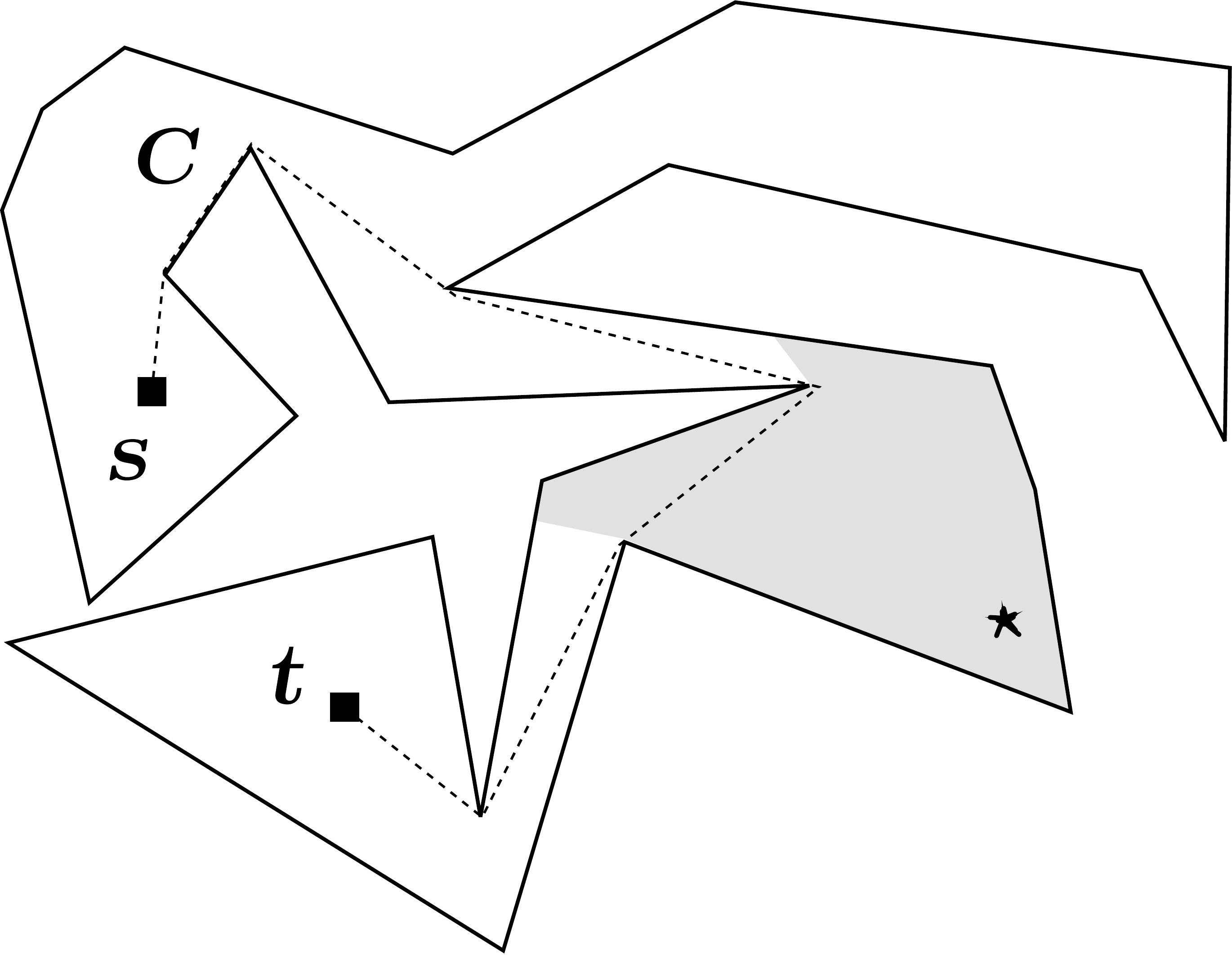}}
  \subfigure[\label{fig:street}]{\includegraphics[width=0.25\textwidth]{./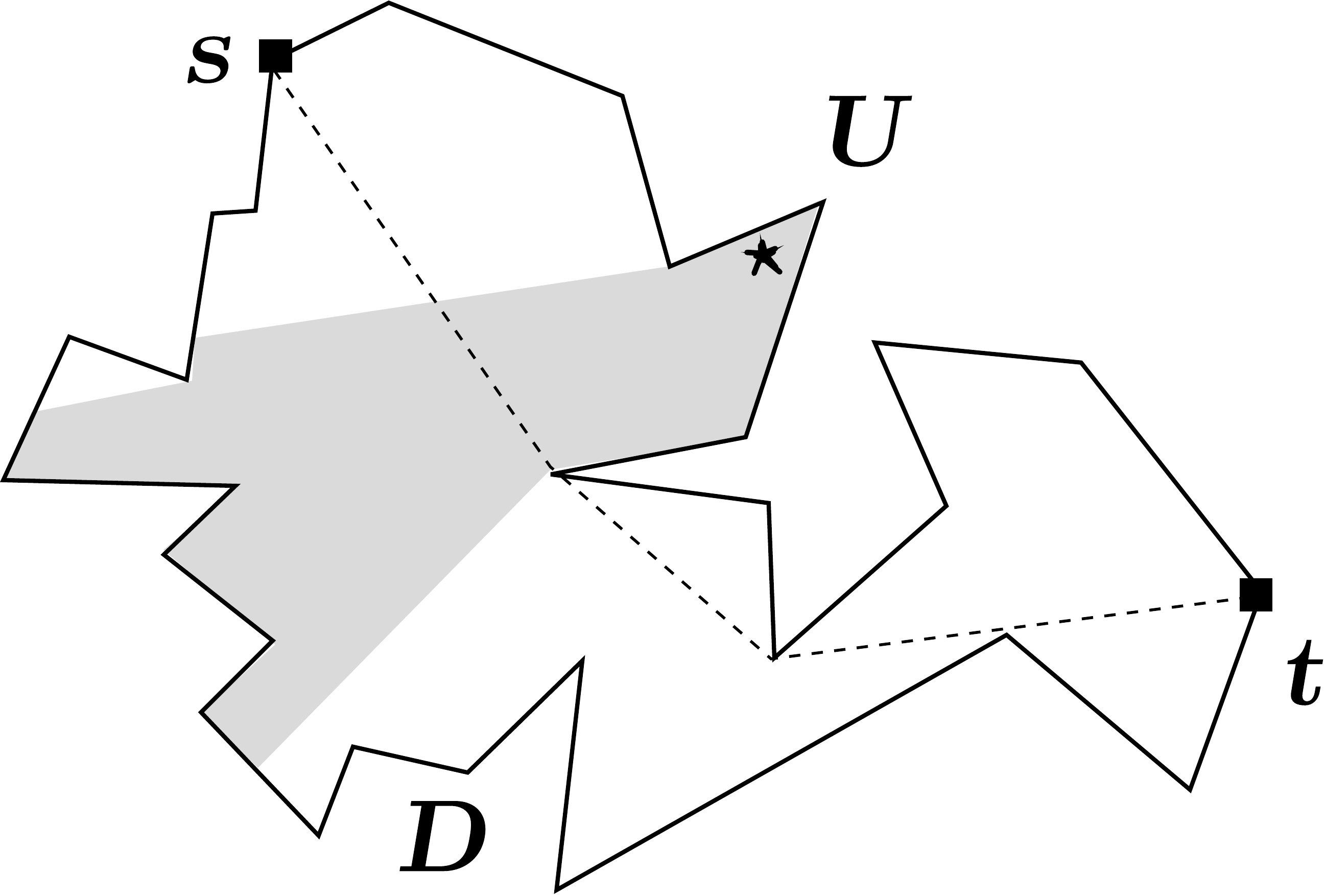}}
  \subfigure[\label{fig:terrain}]{\includegraphics[width=0.3\textwidth]{./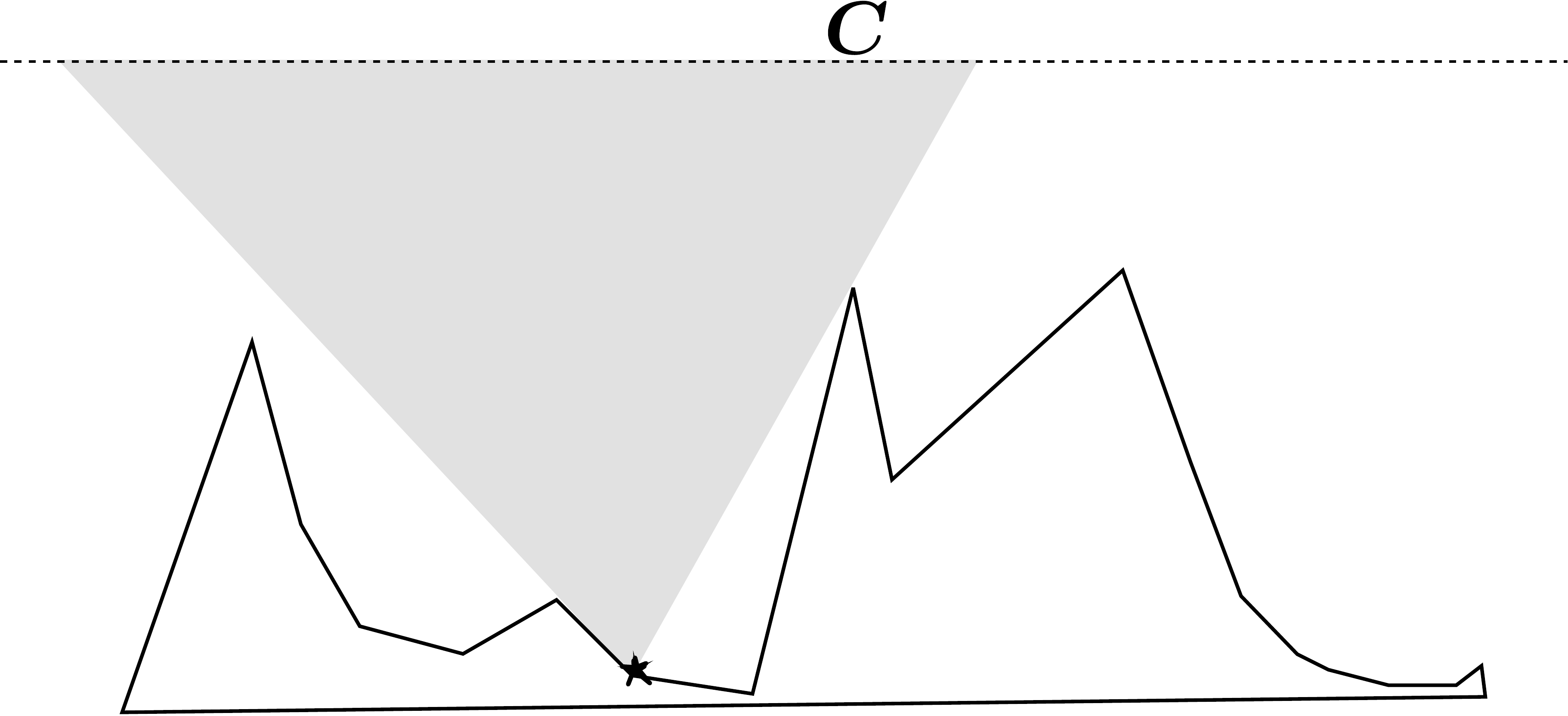}}
\caption{Examples of curves and environments satisfying the chain
  visibility property (Definition~\ref{def:chainvisibility}). (a)
  Shortest path between any two points, $s$ and $t$, in a polygon without holes. (b)
  A collapsed watchman route in a street polygon. (c) 1.5D terrain
  with a fixed altitude path.\label{fig:chainvisibleexamples}}}
\end{figure*}

\begin{definition}
Let $X$ be a set of points and $C$ be some curve in a 2D
environment. The pair $(X,C)$ is said to be chain visible if the
intersection of the visibility polygon of any point $x\in X$ with $C$,
i.e., $\VP(x)\cap X$ is either empty or a connected chain.
\label{def:chainvisibility}
\end{definition}
Although restrictive, the chain-visibility property is satisfied by
various curves. Figure~\ref{fig:chainvisibleexamples} shows some
examples. 

Chain-visibility was used by Carlsson and Nilsson~\cite{carlsson1999computing} to show that there always exists a \emph{collapsed watchman path} satisfying chain-visibility in street polygons. A street polygon is a polygon without holes with the property that its boundary can be partitioned into two chains, $U$ and $D$, and any point in $U$ is visible from some point in $D$ and vice versa. Figure~\ref{fig:street} gives an example. We give more examples of chain-visibility in the following proposition.

\begin{proposition}
Following pairs of target points of interest, $X$, and curves, $C$, all satisfy the chain-visibility property:
\begin{itemize}
\item $X$ is any set of points in a street polygon and $C$ is a collapsed watchman route.
\item $X$ is any set of points in a polygon without holes and $C$ is
  the shortest path between any pair of points $s$ and $t$ in the
  polygon. In particular, $C$ can be a straight line within the
  polygon.
\item $X$ is any set of points on a 1.5D terrain and $C$ is a fixed
  altitude path.
\end{itemize}
\end{proposition}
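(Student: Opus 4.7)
The plan is to treat the three items in turn with different arguments.

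The first item is essentially free: Carlsson and Nilsson~\cite{carlsson1999computing} construct the collapsed watchman route so that, at every stage of the construction, the portion of the route from which each already-seen boundary point is visible forms a contiguous subchain. This invariant restricts immediately to the target set $X \subseteq P$, yielding chain-visibility of $(X, C)$.

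For the second item, the plan is to reduce to the following lemma: in any simple polygon $P$ and any point $x \in P$, the visibility polygon $\VP(x)$ is geodesically convex in $P$, meaning that for any $a, c \in \VP(x)$ the $P$-shortest path from $a$ to $c$ is contained in $\VP(x)$. Granting the lemma, the required chain-visibility is immediate: if $a, c \in C$ both see $x$, then by sub-path optimality of shortest paths the subpath $C[a, c]$ is itself the $P$-geodesic from $a$ to $c$, and so by the lemma $C[a, c] \subseteq \VP(x)$; hence $C \cap \VP(x)$ is a connected subpath of $C$.

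To prove the lemma I would argue by contradiction. Assume $a, c \in \VP(x)$ but some $p$ on the $P$-geodesic $\sigma$ from $a$ to $c$ has $px \not\subseteq P$. The three arcs $ax$, $xc$, $\sigma$ all lie in $P$, so their concatenation is a closed curve $\gamma \subset P$, and since $P$ has no holes, $\gamma$ bounds a topological disk $D \subseteq P$ whose interior is disjoint from $\partial P$. I would then trace the segment $px$ from $p$ and analyze the first point $q \neq p$ at which it meets $\gamma = ax \cup xc \cup \sigma$. When $q$ lies on $xa$ or $xc$, the tail $qx$ is a subsegment of $xa$ or $xc$ and hence lies in $P$, while the initial piece $pq$ (a chord of $\overline{D}$) also lies in $P$, so $px \subset P$---contradiction. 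When $q$ lies on $\sigma$, sub-path optimality of $\sigma$ forces $\sigma[p, q]$ to coincide with the straight segment $pq$, and together with the direction of $px$ at $p$ this reduces to a strictly smaller instance on $\sigma$ from which one can iterate. The hard part will be precisely this last case, together with the non-generic configurations in which $px$ runs along a piece of $\gamma$ or passes through a reflex vertex of $\sigma$; both require careful topological bookkeeping.

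For the third item, parameterize the fixed-altitude path $C$ by horizontal coordinate, $C = \{(u, h) : u \in I\}$, and fix a target $t = (a, f(a))$ on the terrain. I would show directly that the set $V_t := \{u \in I : (u, h) \text{ sees } t\}$ is a subinterval of $I$, which is precisely chain-visibility. For any two path points $u_1 < u_2 < a$, an explicit comparison of the sight-lines from $(u_1, h)$ and $(u_2, h)$ down to $t$, evaluated at an arbitrary common abscissa $u \in [u_2, a]$, shows that the sight-line from the closer path point lies weakly above the one from the farther path point; hence whenever the farther sight-line clears the terrain, so does the closer one. The symmetric statement on the other side of $a$, together with the trivial observation that $(a, h)$ sees $t$ along the vertical segment, gives that $V_t$ is an interval containing $a$.
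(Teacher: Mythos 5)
The paper never actually proves this proposition: the first item is attributed to Carlsson and Nilsson, and the other two are supported only by the pictures in Figure~2, so your write-up supplies arguments where the paper has none. Your overall architecture is sound. Deferring item one to the cited construction is exactly what the paper does. For the shortest-path case, the key lemma you isolate --- geodesic convexity of $\VP(x)$ in a simple polygon --- is true, and your reduction from it to chain-visibility is correct, since a subpath of the (unique) geodesic from $s$ to $t$ is itself the geodesic between its endpoints. The weak point is your proposed proof of that lemma: the closed curve $ax\cup xc\cup\sigma$ need not be simple (the segments $ax$ and $xc$ may overlap near $x$, and $\sigma$ may touch or cross them), so ``bounds a topological disk whose interior is disjoint from $\partial P$'' is not immediate; moreover your final case ends in an iteration whose termination is not justified. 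A cleaner argument avoids all of this: $P\setminus\VP(x)$ is a union of pockets, each cut off by a window chord $w$ collinear with $x$, with $a,c\in\VP(x)$ lying on the $x$-side of every window; if the geodesic $\sigma$ entered the open pocket behind some $w$ it would have to cross $w$ at least twice, and replacing the portion of $\sigma$ between the first and last crossings by the straight subsegment of $w$ joining them strictly shortens $\sigma$, a contradiction. Hence $\sigma\subseteq\VP(x)$.

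For the terrain case your two-line comparison is correct, but it silently assumes $h\geq f(a)$, i.e., the flight altitude is at least the target's height. This holds whenever the target lies beneath the horizontal extent of the path (the path must clear the terrain there), and in the only remaining configuration ($a$ outside that extent with $f(a)>h$) the same comparison yields the reverse monotonicity --- visibility is then downward-closed toward the far end of $I$ --- so the visible set is still an interval. With the pocket argument substituted for your disk-tracing argument and this case noted, the proof is complete.
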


Naturally, if the intersection of the visibility region of a point with $C$ is empty, we will not be able to compute a viewpoint on $C$ to see the point. Hence, we consider only those situations where each point in $X$ is visible from some point in $C$ while satisfying the chain-visibility property. Formally, the first problem we consider is the following:
\begin{problem}
~\\
\textbf{Input:} set of points of interest, $X$, and a curve, $C$, such
that $(X,C)$ is chain-visible\\
\textbf{Output:} $m$ paths $\Pi=\{\Pi_i|\Pi_i\subseteq C\}$ each with
viewpoints, $V_i$, where for all $x\in X$ we have $x\in\cup\VP(V_i)$.\\
\textbf{Objective:} $\minimize\limits_{\Pi}\max\limits_{\Pi_i\in\Pi} \left(l(\Pi_i) +
|V_i|t_m\right)$.\\
\textbf{Contribution:} Optimal algorithm
(Section~\ref{sec:dynprog}). 
\label{prob:dynprog}
\end{problem}
The optimal algorithm for this problem is given in Section~\ref{sec:dynprog}.

In the second problem, $X$ is simply the set of all points in the polygon. Our goal is to find $m$ paths, restricted to a chain-visible curve, so as to monitor every point in the environment. Here, we focus on the case of street polygons where we know there always exists a curve, namely the collapsed watchman route, which is chain-visible for any subset of points in the polygon~\cite{carlsson1999computing}. Street polygons have previously been studied in the context of robot
navigation in unknown environments~\cite{icking2004optimal,icking2002competitive}.

\begin{problem}
~\\
\textbf{Input:} street polygon, $P$, and a chain-visible curve, $C$\\
\textbf{Output:} $m$ paths, $\Pi=\{\Pi_i|\Pi_i\subseteq C\}$, each with
viewpoints, $V_i$, where for all $x\in X$ we have $x\in\cup\VP(V_i)$.\\
\textbf{Objective:} $\minimize\limits_{\Pi}\max\limits_{\Pi_i\in\Pi} \left(l(\Pi_i) +
|V_i|t_m\right)$.\\
\textbf{Contribution:} $4$--approximation algorithm
(Section~\ref{sec:street}). 
\label{prob:street}
\end{problem}
Carlsson and Nilsson~\cite{carlsson1999computing} presented an optimal algorithm to find the fewest number of viewpoints along $C$ to see every point in $P$. One way to compute paths would be to first find this smallest set of viewpoints and then distribute them into $m$ paths. Unfortunately, this approach can lead to paths which are arbitrarily longer and consequently arbitrarily worse than optimal paths (Figure~\ref{fig:badstreet}). Nevertheless, we present a $4$--approximation algorithm for Problem~\ref{prob:street}.

We consider a general version in the third problem at the expense of discretization. The cost is simply the sum of the distances traveled by the robots. We are given a graph along which the robots can navigate. The robots can stop and take a measurement at the vertices of this graph. We show how to find the optimal solution by reducing this to a TSP instance which can then solved by a numerical solver (\eg, concorde~\cite{applegate2006concorde}).

\begin{problem}
~\\
\textbf{Input:} set of points of interest, $X$, and a set of viewpoints, $V$, in a polygon\\
\textbf{Output:} $m$ paths, $\Pi=\{\Pi_i|\Pi_i\subseteq V\}$, where for all $x\in X$ we have $x\in\cup\VP(\Pi_i)$.\\
\textbf{Objective:} $\minimize\limits_{\Pi}\sum\limits_{\Pi_i\in\Pi} \left(l(\Pi_i)\right)$.\\
\textbf{Contribution:} optimal solution
(Section~\ref{sec:general}). 
\label{prob:general}
\end{problem}

The solution is presented in Section~\ref{sec:general}. Our solution extends the single robot solution presented by Obermeyer et al.~\cite{obermeyer2012sampling} to the case of multiple robots. 

\begin{figure}[htb]
\centering{ 
\includegraphics[width=0.8\columnwidth]{./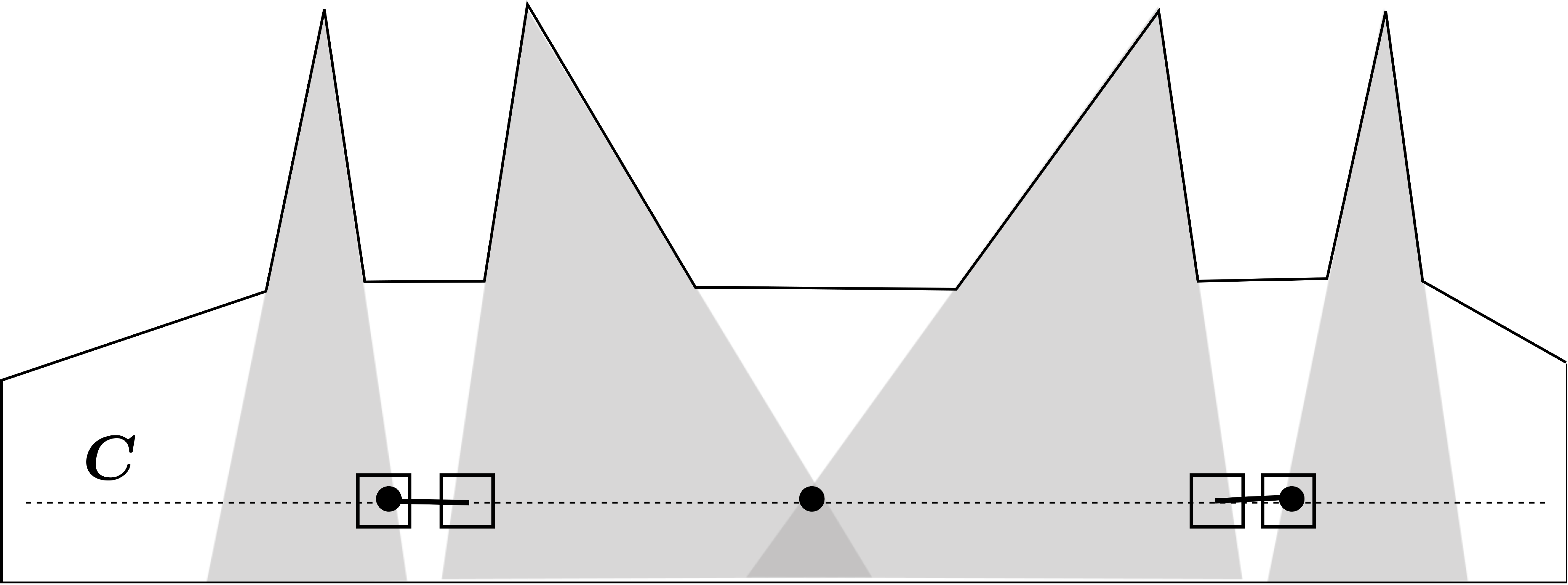}
\caption{Problem~\ref{prob:street}. The optimal solution with $m=2$
  paths consists of two short paths with two viewpoints ($\square$)
  each. However, first finding the minimum number of discrete
  viewpoints (\CIRCLE) on the input curve $C$ and then finding the
  $m=2$ paths to visit them may give arbitrarily long solutions (at
  least one path will have to visit two
  \CIRCLE). \label{fig:badstreet}}}
\end{figure}




\section{Optimal Algorithm when Given A Set of Targets and Chain-Visible Curve}
\label{sec:dynprog}

In this section, we present an optimal algorithm for solving
Problem~\ref{prob:dynprog}. Here, we are given as input a set of
points $X$ in an $n$-sided 2D polygon that must be visually
monitored. The goal is to find a set of viewpoints $V$ along with $m$
watchmen paths that visit $V$. The watchmen paths are restricted to an
input curve $C$ that along with $X$ satisfies the chain-visibility
property (Definition~\ref{def:chainvisibility}). We
show how to compute $V$ and find the $m$ watchmen paths optimally.

In general, the viewpoints $V$ can be anywhere along $C$, i.e., no
finite candidate set for $V$ is given. We will first establish that
there always exists an optimal solution in which $V$ is restricted to
either endpoint of the intersection of $\VP(x)$ with $C$, where $x$ is
some point in $X$. Let $C_x$ denote the segment $\VP(x)\cap C$ for
$x\in X$. For ease of notation, we will assign an ordering of points
on the curve $C$. This allows us to define the left and right
endpoints for $C_x$ (equivalently, first and last points of $C_x$
along $C$). We have the following result.

\begin{lemma}
There exists an optimal solution for Problem~\ref{prob:dynprog} with
viewpoints $V^*$ such that for any $v\in V^*$, if $v$ is the left
(respectively, right) endpoint of some path $\Pi_i^*$, then $v$ must
be the right (respectively, left) endpoint of some $C_x$.
\label{lem:dynprogorder}
\end{lemma}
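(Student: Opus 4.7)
My plan is to prove the lemma by a local exchange argument. Starting from any optimal solution $(\Pi^{*}, V^{*})$, I will apply a finite sequence of modifications that do not raise the max-path objective until every left endpoint of a path coincides with a right endpoint of some $C_x$ and, symmetrically, every right endpoint of a path coincides with a left endpoint of some $C_x$. Reversing the orientation of $C$ interchanges these two halves of the statement, so it is enough to describe the argument for left endpoints.

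First I would reduce to the case in which the leftmost point of each path $\Pi_i^{*}$ is itself a viewpoint: if not, truncating $\Pi_i^{*}$ on the left up to its leftmost viewpoint strictly decreases $l(\Pi_i^{*})$ without changing $|V_i^{*}|$ or disturbing coverage. Fix a path $\Pi_i^{*}$ and let $v_L$ be its (viewpoint) left endpoint. Define $T_{v_L} \subseteq X$ as the set of targets visible from $v_L$ but from no other viewpoint in $V^{*} = \bigcup_{j} V_j^{*}$. If $T_{v_L} = \emptyset$, then $v_L$ may simply be deleted, reducing $|V_i^{*}|$ while preserving coverage. Otherwise, chain-visibility ensures that each $C_x$ for $x \in T_{v_L}$ is a connected subchain of $C$ containing $v_L$, and hence $\bigcap_{x \in T_{v_L}} C_x$ is itself a connected chain whose rightmost point is $v^{\star} = \min_{x \in T_{v_L}} r(C_x) = r(C_{x^{\star}})$ for the argmin $x^{\star}$.

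The heart of the argument is to slide $v_L$ rightward along $C$. As long as $v_L$ stays inside $\bigcap_{x \in T_{v_L}} C_x$, coverage is preserved and $l(\Pi_i^{*})$ monotonically decreases. The slide terminates at the first of two events: either $v_L$ reaches $v^{\star} = r(C_{x^{\star}})$, which is exactly the condition required by the lemma, or $v_L$ meets the next viewpoint of $V_i^{*}$ to its right, in which case the two may be merged and $|V_i^{*}|$ decreases by one. In either case the cost of $\Pi_i^{*}$ weakly decreases, so the max-path objective does not increase. A symmetric construction sliding leftward handles right endpoints, driving them to positions of the form $l(C_{x'})$.

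The main obstacle is termination together with the degenerate case in which a path collapses to a single viewpoint. For termination: every modification either strictly decreases $|V^{*}|$, a nonnegative integer, or places a path endpoint at one of the at most $2|X|$ discrete positions $\{l(C_x), r(C_x) : x \in X\}$, after which the sliding construction no longer applies to that endpoint; thus the process halts after finitely many iterations. A single-viewpoint path requires both clauses of the lemma to hold at the same point, and I would handle it by applying the sliding analysis in both directions to the lone viewpoint, using the fact that the feasible range $\bigcap_{x \in T} C_x$ for its uniquely-covered set $T$ is a connected chain whose extremes are simultaneously of the forms $r(C_{x^{\star}})$ and $l(C_{x'})$. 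Iterating the construction across all $m$ paths then produces an optimal solution of the structure claimed.
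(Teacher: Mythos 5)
Your core argument is the same one the paper uses: a coverage-preserving exchange that pushes a path's left endpoint rightward, justified by chain-visibility (every $C_x$ containing $v_L$ is an interval whose right end lies weakly to the right of the first right-endpoint encountered, so nothing visible from $v_L$ is lost). You execute it more carefully than the paper does --- you track the uniquely-covered set $T_{v_L}$, handle the case where the slide collides with another viewpoint of the same path, and give an explicit termination measure --- and all of that is sound.

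The one place your argument does not close is the degenerate single-viewpoint path, and your proposed patch is incorrect as written. For a lone viewpoint $v$ the lemma demands a \emph{single} point that is simultaneously the right endpoint of some $C_x$ and the left endpoint of some $C_{x'}$. The feasible interval $\bigcap_{x\in T} C_x = [\max_x l(C_x),\ \min_x r(C_x)]$ has its left extreme of the form $l(\cdot)$ and its right extreme of the form $r(\cdot)$, but these are in general two \emph{different} points, and neither is of both forms; your sentence asserting that the extremes are ``simultaneously'' of both forms conflates this. A one-target instance with $m=1$ and $C_x=[0,1]$ shows the difficulty: the optimal solution is a single viewpoint anywhere in $[0,1]$, yet no point of $C$ is both a right endpoint of some $C_x$ and a left endpoint of some $C_{x'}$. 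To be fair, this is a defect of the lemma as literally stated rather than of your exchange: the paper's own proof treats the two endpoint types separately and appeals to symmetry, silently ignoring that for a zero-length path the two slides conflict. The clean repair is to weaken the claim for single-point paths --- e.g., such a path can always be placed at $\min_{x\in T} r(C_x)$, so every path endpoint lies in the finite set of all left and right endpoints of the intervals $C_x$ (at most $2|X|$ points), which is all the dynamic program actually needs. If you state and prove that version, your argument goes through without further change.
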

The proof is given in the appendix.

This lemma allows us to restrict our attention only to the set of
finite (at most $2|X|$) points on $C$. Furthermore, we need to
consider only the right endpoints of all $C_x$ for starting a path,
and only the left endpoints for ending a path. We will use dynamic
programming to find the optimal starting and ending points of $m$
paths. Before we describe the dynamic programming solution, we present
a subroutine that is useful in computing the cost of a path when the
first and last viewpoint on the path is given.

\begin{algorithm}
\DontPrintSemicolon
\KwIn{$i,j$: first and last viewpoints for a path $\Pi_i$ on $C$}
\KwIn{$X'$: set of target points such that $\forall x\in X', \VP(x)\cap
  \Pi_i\neq\emptyset$}
\KwOut{$V_i$: optimal set of viewpoints on $\Pi_i$ to cover $X'$
  (including $i$ and $j$)}
\KwOut{$J_i$: optimal cost for $\Pi_i$ to cover $X'$}
Mark all points in $X'$ as uncovered\;
Mark all points in $X'$ visible from either $i$ or $j$ as covered\;
$V_i\leftarrow\{i,j\}$\;
$p\leftarrow i$\;
\While{$\exists$ an uncovered point in $X'$}{
$q\leftarrow$ first point to the right of $p$ such that $q$ is the
  right endpoint of $C_x$ for some uncovered $x\in X'$\; \label{line:q} 
$V_i\leftarrow V_i\cup\{q\}$\;
Mark all $x\in X'$ visible from $q$ as covered\;
$p\leftarrow q$\;
}
$J_i = l(\Pi_i) + |V_i|t_m$\;
return $V_i$ and $J_i$
\caption{\textsc{OptimalSinglePath}\label{algo:optsinglepath}}
\end{algorithm}

The subroutine given in Algorithm~\ref{algo:optsinglepath} takes as
input a path $\Pi_i$ defined by its first and last viewpoint on
$C$. It also takes as input a set of target points $X'$ that are
visible from at least one point along $\Pi_i$. The output of the
subroutine is the optimal set of viewpoints $V_i$ (subject to the
condition that first and last point of $\Pi_i$ are included) and the
optimal cost $J_i$ of this path. The following lemma proves the
correctness of this algorithm.

\begin{lemma}
Let $X'$ be a set of target points and $C$ be a chain-visible
curve. Let $\Pi_i$ be some path along $C$ and $X'\subseteq X$ be
target points visible from $\Pi_i$. If the first and last viewpoints
of $\Pi_i$ are $i$ and $j$ respectively, then
Algorithm~\ref{algo:optsinglepath} computes the optimal set of
viewpoints and the optimal cost for $\Pi_i$.
\label{lem:optsinglepath}
\end{lemma}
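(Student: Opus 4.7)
The plan is to prove correctness in two stages. First, I would reduce the minimization of $J_i = l(\Pi_i) + |V_i| t_m$ to the minimization of $|V_i|$: since the first and last viewpoints of $\Pi_i$ are fixed at $i$ and $j$, the arc length $l(\Pi_i)$ along $C$ is fixed and depends only on $i,j$. Hence only the cardinality $|V_i|$ varies over feasible choices of interior viewpoints, and optimality of the algorithm reduces to showing that its greedy loop uses the minimum number of viewpoints subject to $\{i,j\}\subseteq V_i$ and $V_i \subseteq \Pi_i$ covering $X'$.

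Next, I would recast this as a classical minimum interval stabbing problem on $C$. By the chain-visibility of $(X,C)$, for each $x\in X'$ the set $C_x = \VP(x)\cap C$ is a connected subarc of $C$, and a viewpoint $v$ sees $x$ precisely when $v\in C_x$. A short argument shows that any target still uncovered after placing $i,j$ satisfies $C_x\subset(i,j)$: its $C_x$ is connected and meets $\Pi_i$ but contains neither $i$ nor $j$. The subproblem is therefore to stab every such interval $C_x$ using the fewest points in $(i,j)$.

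The key step is then the standard exchange argument for greedy interval stabbing. Let $G=\{i,j,g_1,\ldots,g_k\}$ be the output of Algorithm~\ref{algo:optsinglepath} and let $O^*$ be any optimal feasible set containing $i$ and $j$. I would induct on $k$. At iteration $t$, the algorithm selects the uncovered target $x$ whose right endpoint of $C_x$ is leftmost and sets $g_t$ to that endpoint. Any $o^*\in O^*$ that covers $x$ lies in $C_x$, so $o^*\le g_t$. For every other uncovered $y$ covered by $o^*$, the left endpoint of $C_y$ is at most $o^*\le g_t$, and by the choice of $x$ the right endpoint of $C_y$ is at least $g_t$; hence $g_t\in C_y$ and $g_t$ covers $y$ as well. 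Swapping $o^*$ for $g_t$ in $O^*$ preserves feasibility and cardinality, so an optimal solution containing $g_t$ exists; applying induction to the residual stabbing problem gives $|G|\le|O^*|$.

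The main obstacle, in my view, is not the greedy proof itself but recognizing the reduction: once chain-visibility is used to turn each $\VP(x)\cap C$ into a single interval, the rest is textbook. A minor bookkeeping point worth spelling out is the loop invariant that every still-uncovered $y$ has $C_y$ strictly to the right of the current pointer $p$; this follows inductively because, by the greedy rule, the right endpoint of any uncovered $C_y$ must exceed that of the $C_x$ just used, and hence exceed $p$.
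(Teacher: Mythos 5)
Your proof is correct, but the optimality half proceeds by a genuinely different argument than the paper's. Both proofs agree on the setup: fixing $i$ and $j$ fixes $l(\Pi_i)$, so only $|V_i|$ matters, and chain-visibility turns coverage into stabbing the connected subarcs $C_x$ with points of $C$; both also need the observation that every target uncovered by $\{i,j\}$ has $C_x$ strictly inside $(i,j)$. Where you diverge is in certifying that the greedy count is minimal: you use the classical exchange argument (any optimal stabbing set can be rewritten, one point at a time, to contain each successive greedy pick without increasing cardinality), whereas the paper uses a counting argument --- the $n$ interior greedy viewpoints partition $\Pi_i$ into $n+1$ cells, each greedy pick is witnessed by an interval $C_x$ confined to its cell, so any set with fewer points leaves some cell empty and hence some witness interval unstabbed. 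The paper's route is shorter once the witnesses are identified, and it directly exhibits a lower-bound certificate ($n$ essentially disjoint intervals each demanding its own stabber); your exchange argument is slightly longer but more robust, since it also yields the stronger structural fact that the greedy output is ``leftmost-shifted'' among optimal solutions --- exactly the kind of statement the paper needs again in Lemma~\ref{lem:right}, which it proves by a separate induction. Your coverage/termination argument via the loop invariant (every still-uncovered $C_y$ lies strictly to the right of the current pointer $p$) is also cleaner than the paper's case analysis with $v$ and $v'$, and it is correct as stated. No gaps; just be sure to note explicitly, as you do implicitly, that the point $o^*$ being swapped out cannot be $i$, $j$, or an earlier greedy pick, so the exchange preserves the constraint that $i$ and $j$ remain in the solution.
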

The proof of correctness in given in the appendix.

From Lemma~\ref{lem:dynprogorder} we know that all paths in an optimal
solution start and end at the right and left endpoints of
$C_x$. Denote the set of all right and left endpoints by $R$ and $L$
respectively. We build a table of size $|L|\times|R|\times m$. The
entry $T(i,j,k)$ gives the maximum cost of the first $k$ paths, with
the $k^{th}$ path starting at some $i\in R$ and ending at some $j\in
L$, and all $k'<k$ paths ending before $i$. To correctly fill in the
entry $T(i,j,k)$, we must ensure that there does not exist any $C_x$
that starts after the $(k-1)^{th}$ path and ends before $k^{th}$ path
(Figure~\ref{fig:dynprog}). Let $I(j',i)$ be a binary indicator which
is $1$ if there exists a $C_x$ which is strictly contained between
points $j'$ and $i$, $i>j'$ (but does not contain $i$ and $j'$). Let
$\texttt{Inf}$ be a very large number. We compute $T(i,j,k)$ as
follows.

\begin{figure}[htb]
\centering{ 
\includegraphics[width=0.8\columnwidth]{./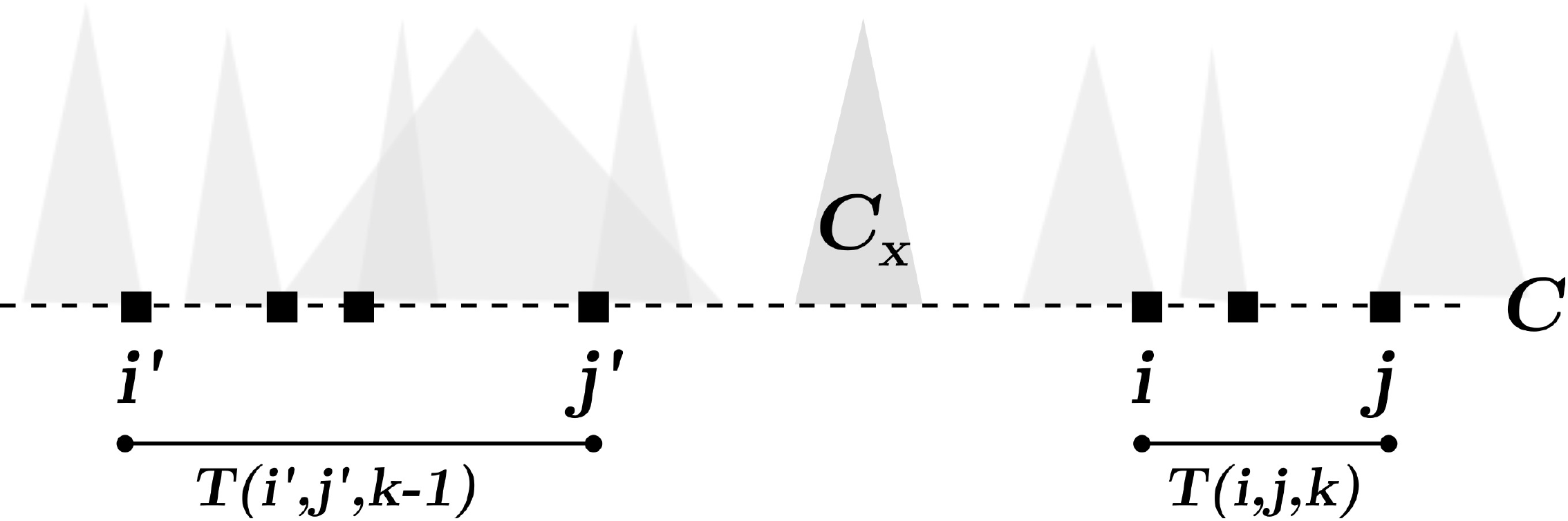}
\caption{Dynamic Programming. We search for the start and end points
  for $m$ paths. Table entry $T(i,j,k)$ gives the cost of the $k^{th}$
  part starting at $i$ and ending in $j$. We must make sure that there
  is no point $x$ such that the $C_x$ lies completely between the
  $(k-1)^{th}$ and $k^{th}$ paths. \label{fig:dynprog}}}
\end{figure}

Let $s$ be the first point on $C$. In initializing the entry
$T(i,j,k=1)$ we must ensure that there is no target such that $C_x$
ends before $i$. Thus, for all $i=1$ to $|R|$ and $j=1$ to $|L|$ we
initialize,
\begin{align*}
T(i,j,1) = 
\begin{cases}
\textsc{OptimalSinglePath}(i,j) \hfill &I(s,i) = 0\\
\texttt{Inf}\hfill &I(s,i)>0
\end{cases}
\end{align*}

To fill the rest of the entries, we first find points $j'$ and $i'$
given $i,j,k$,
\begin{align}
[j',i'] = \argmin_{j'<i,i'\leq j'}\{T(i',j',k-1)+\texttt{Inf}\cdot
I(j',i)\}.
\label{eqn:jiprimes}
\end{align}
The term $I(j',i)$ ensures that there is no $C_x$ that starts after
$j'$ but ends before $i$. Furthermore, since the $(k-1)^{th}$ path
ends at $j'<i$, we know all $C_x$ that start before $j'$ will be
covered. This only leaves two types of points to consider: (i) $C_x$
starts after $j'$ but does not end before $i$, and (ii) $C_x$ starts
after $i$. While filing in $T(i,j,k)$ we first compute $j',i'$
according to Equation~\ref{eqn:jiprimes}. Then, we verify if there
exists any point $x$ belonging to either of the two types listed
above. If not, then all points in $X$ have already been covered by the
first $k-1$ paths. Hence, we set $T(i,j,k)=T(i',j',k-1)$. If there is
exists a point belonging either of the two types listed, then
\begin{align*}
T(i,j,k)=\max\{\textsc{OptimalSinglePath}(i,j), T(i',j',k-1) \}.
\end{align*}

Additionally, if $k=m$ we must check if there is any point which has
not been covered. Let $t$ be the rightmost point of the curve $C$. If
$I(j,t)=1$, we set $T(i,j,m)$ to $+\texttt{Inf}$.

To recover the final solution, we have to find the entry $T(i,j,m)$
with the least cost. Using additional book-keeping pointers, we can
recover the optimal solution by standard dynamic programming
backtracking. The following theorem summarizes our main result for
this section.

\begin{theorem}
There exists a polynomial time algorithm that finds the optimal
solution for Problem~\ref{prob:dynprog}.
\end{theorem}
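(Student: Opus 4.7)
The plan is to show that the dynamic program described above is both correct and runs in time polynomial in $n$, $|X|$, and $m$; the theorem is then immediate. Correctness splits into two parts: verifying that the search is over a sufficient candidate set, and verifying that the recurrence correctly stitches together optimal single-path costs under the min--max objective.

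First, I would appeal to Lemma~\ref{lem:dynprogorder} to reduce the continuous search for viewpoints to the $O(|X|)$ candidate endpoints in $R$ and $L$. This guarantees that \emph{some} entry $T(i,j,m)$ of the table corresponds to the start and end of the final path in an optimal solution. Lemma~\ref{lem:optsinglepath} guarantees that, once the endpoints $i$ and $j$ of a single path are fixed, $\textsc{OptimalSinglePath}(i,j)$ returns the optimal set of interior viewpoints (hence the optimal cost) for that path. Given these two building blocks, only the inductive stitching done by the recurrence needs verification.

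I would then argue correctness by induction on $k$, with the invariant: $T(i,j,k)$ equals the optimal min--max cost over $k$ paths whose $k^{th}$ path starts at $i\in R$ and ends at $j\in L$, subject to covering every $x\in X$ whose interval $C_x$ ends at or before $j$. The base case $k=1$ follows from the initialization, where the check $I(s,i)=0$ rules out targets whose $C_x$ lies entirely to the left of $i$. For the inductive step, the predecessor $(i',j')$ chosen in Equation~\ref{eqn:jiprimes} minimizes $T(i',j',k-1)$ while the penalty $\texttt{Inf}\cdot I(j',i)$ forbids transitions that leave some $C_x$ strictly between the $(k-1)^{th}$ and the $k^{th}$ paths. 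Any $C_x$ ending at or before $j'$ is covered by the first $k-1$ paths by the inductive hypothesis; any $C_x$ intersecting $[i,j]$ is covered by $\textsc{OptimalSinglePath}(i,j)$; and the ``gap'' case is precisely what $I(j',i)=0$ excludes. The outer $\max$ tracks the bottleneck path cost as required by the min--max objective, and the final check $I(j,t)=0$ ensures that the last path leaves no target uncovered to its right.

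For the runtime, the table has $|R|\cdot|L|\cdot m = O(|X|^2 m)$ entries. Each entry requires a minimization over $O(|X|^2)$ predecessor pairs, one call to $\textsc{OptimalSinglePath}$ (polynomial in $|X|$), and $O(1)$ indicator queries after an $O(|X|^2)$ preprocessing of $I$; the intervals $C_x$ themselves are computed in time polynomial in $n$ and $|X|$ using standard visibility polygon primitives. The main obstacle, I expect, is formalizing the inductive invariant carefully enough that the three coverage cases above partition exhaustively --- in particular, arguing that no optimal solution benefits from having an earlier path ``reach past'' a later target on the curve. This is exactly where chain-visibility of $(X,C)$ is essential: because each $C_x$ is a single connected chain rather than a union of disconnected pieces, the targets are totally ordered along $C$ by their endpoints, and the DP's left-to-right decomposition loses no generality.
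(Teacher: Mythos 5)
Your proposal is correct and follows essentially the same route as the paper: Lemma~\ref{lem:dynprogorder} to discretize the candidate endpoints, Lemma~\ref{lem:optsinglepath} for the fixed-endpoint subproblem, and the $T(i,j,k)$ recurrence with the indicator $I(\cdot,\cdot)$ guarding against uncovered intervals between consecutive paths. The paper states the theorem as a summary of this construction without a separate written proof, so your explicit inductive invariant and the $O(|X|^2 m)$-entry runtime count are a faithful (and slightly more detailed) rendering of the intended argument.
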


The property in Lemma~\ref{lem:dynprogorder} allowed us search over a
finite set of points for computing the endpoints of the paths. This
comes from the finiteness of the set $X$. Now, consider the case when
\emph{all} points in a polygon are to be monitored by the robots. We
may have possibly infinite candidate endpoints for the $m$ optimal
paths along $C$. Nevertheless, in the next section we will show how to
compute an approximation for the optimal paths in finite time.

\subsection{Simulations}
\label{sec:sims}
\begin{figure*}[htb]
\centering{ 
\subfigure[\label{fig:sims_env}]{\includegraphics[width=0.3\textwidth]{./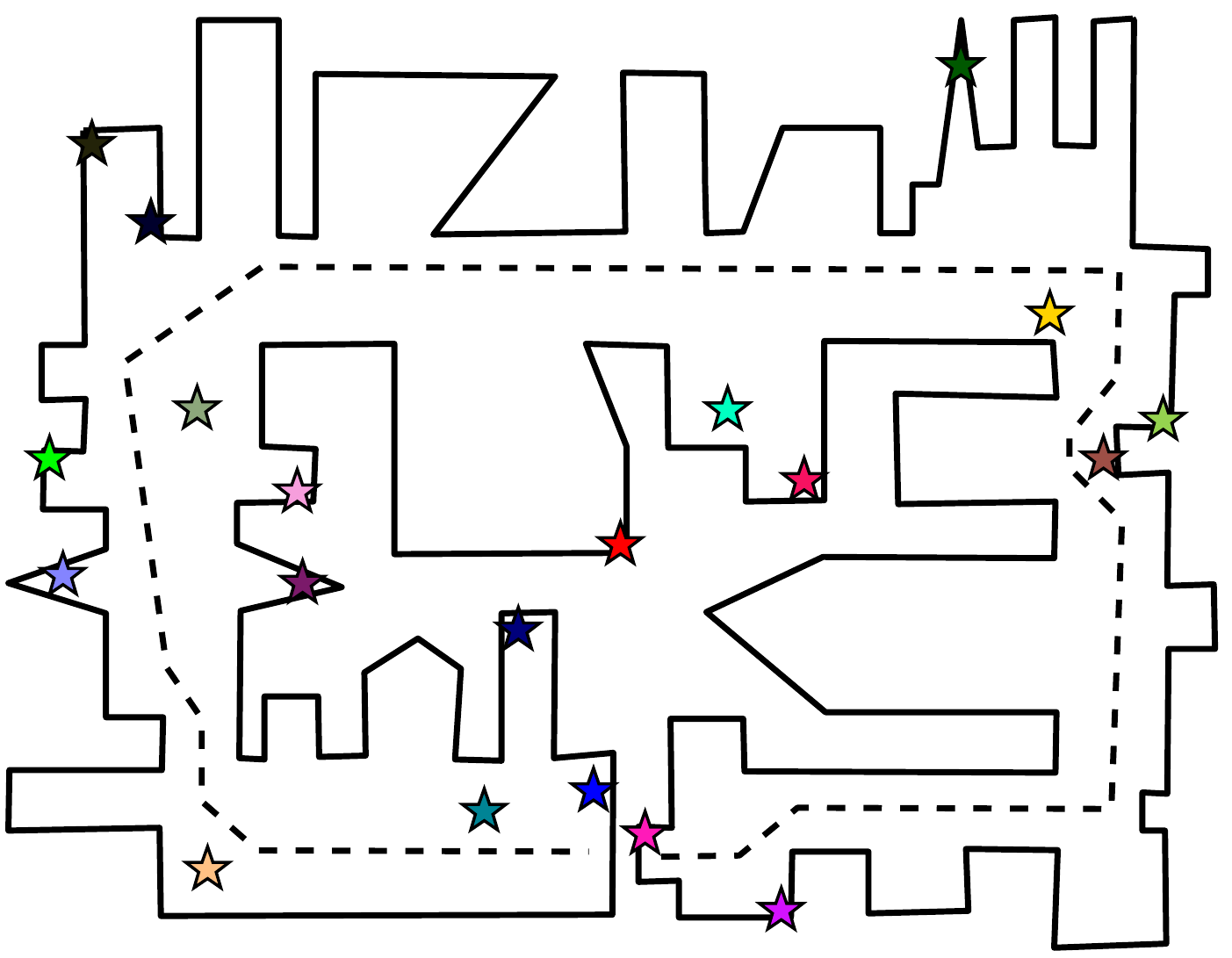}}
\subfigure[\label{fig:simnumrobots}]{\includegraphics[width=0.3\textwidth]{./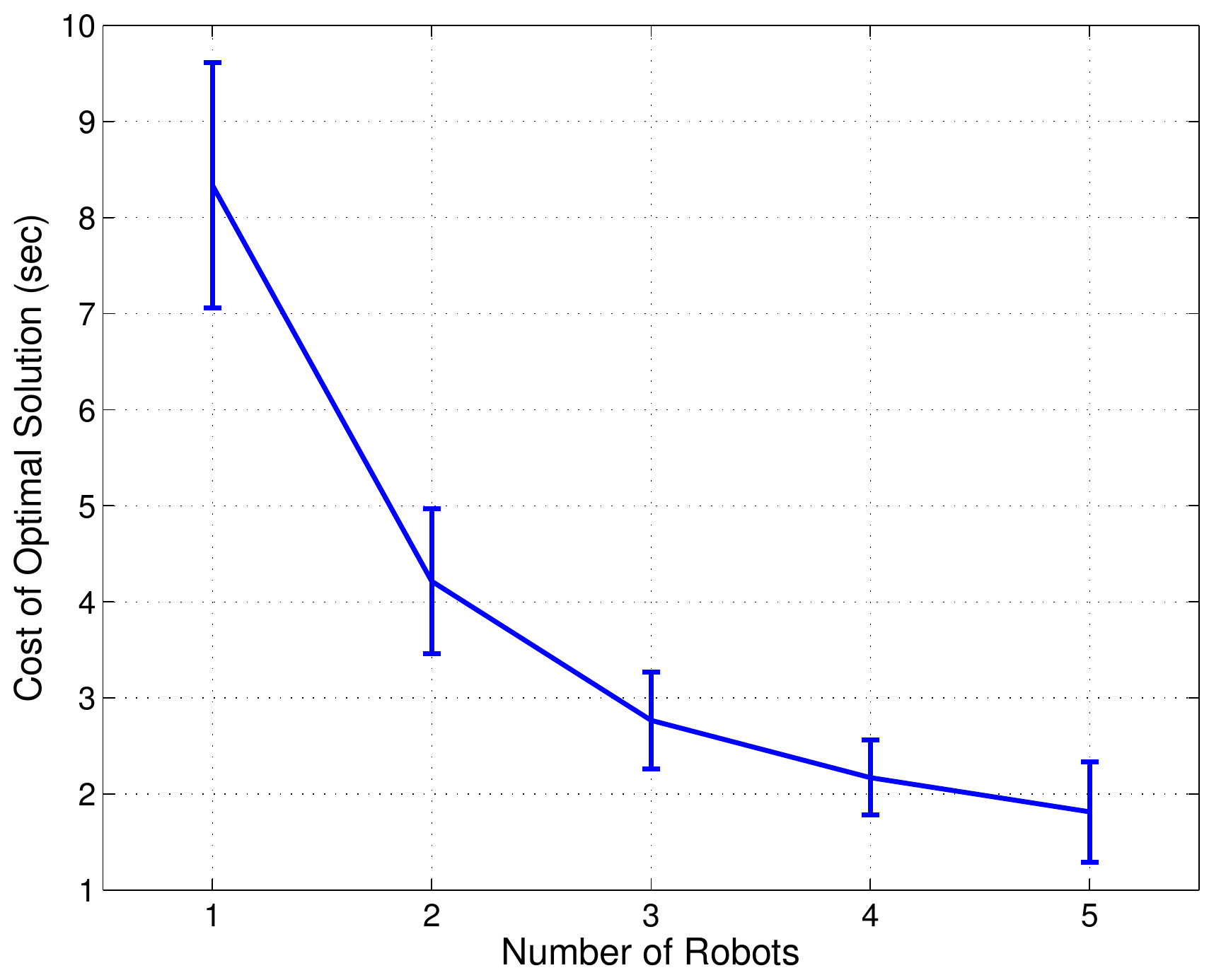}}
\subfigure[\label{fig:simnumtargets}]{\includegraphics[width=0.3\textwidth]{./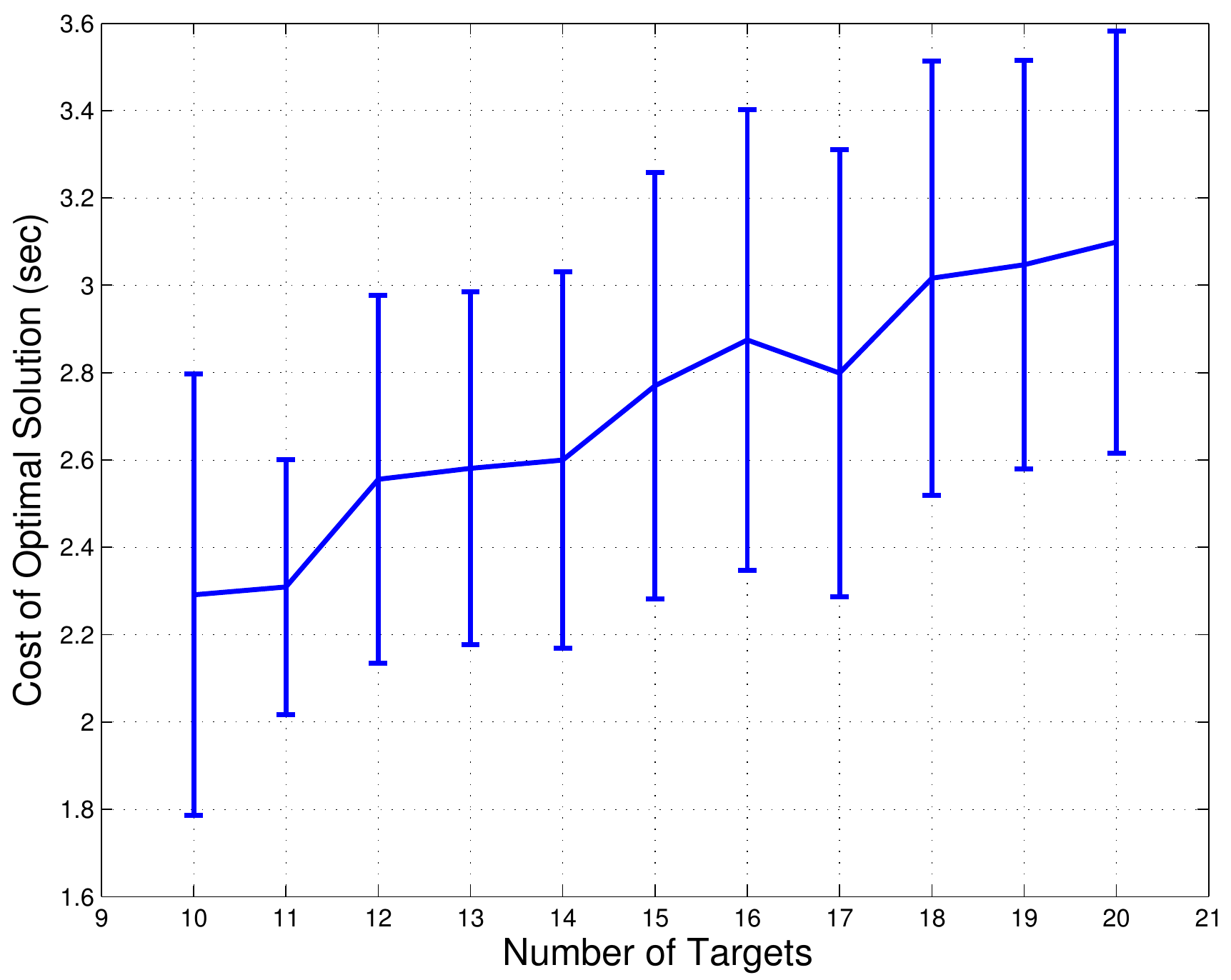}}
\caption{(a) Representative simulation instance. The target points ($\star$) are generated randomly for each trial within the polygonal environment. (b) Varying the number of robots on the optimal solution cost (makespan). The number of targets is fixed to 15 for all trials. (c) Varying the number of targets on the optimal solution cost (makespan). The number of robots is fixed to 3 for all trials.\label{fig:sims}}}
\end{figure*}

The algorithm was implemented in MATLAB using the VisiLibity~\cite{visilibity08} library for floating-point visibility computations. The polygonal environment shown in Figure~\ref{fig:sims_env} was used to generate all the simulation instances. 

\begin{figure}[htb]
\centering{ 
\includegraphics[width=0.45\columnwidth]{./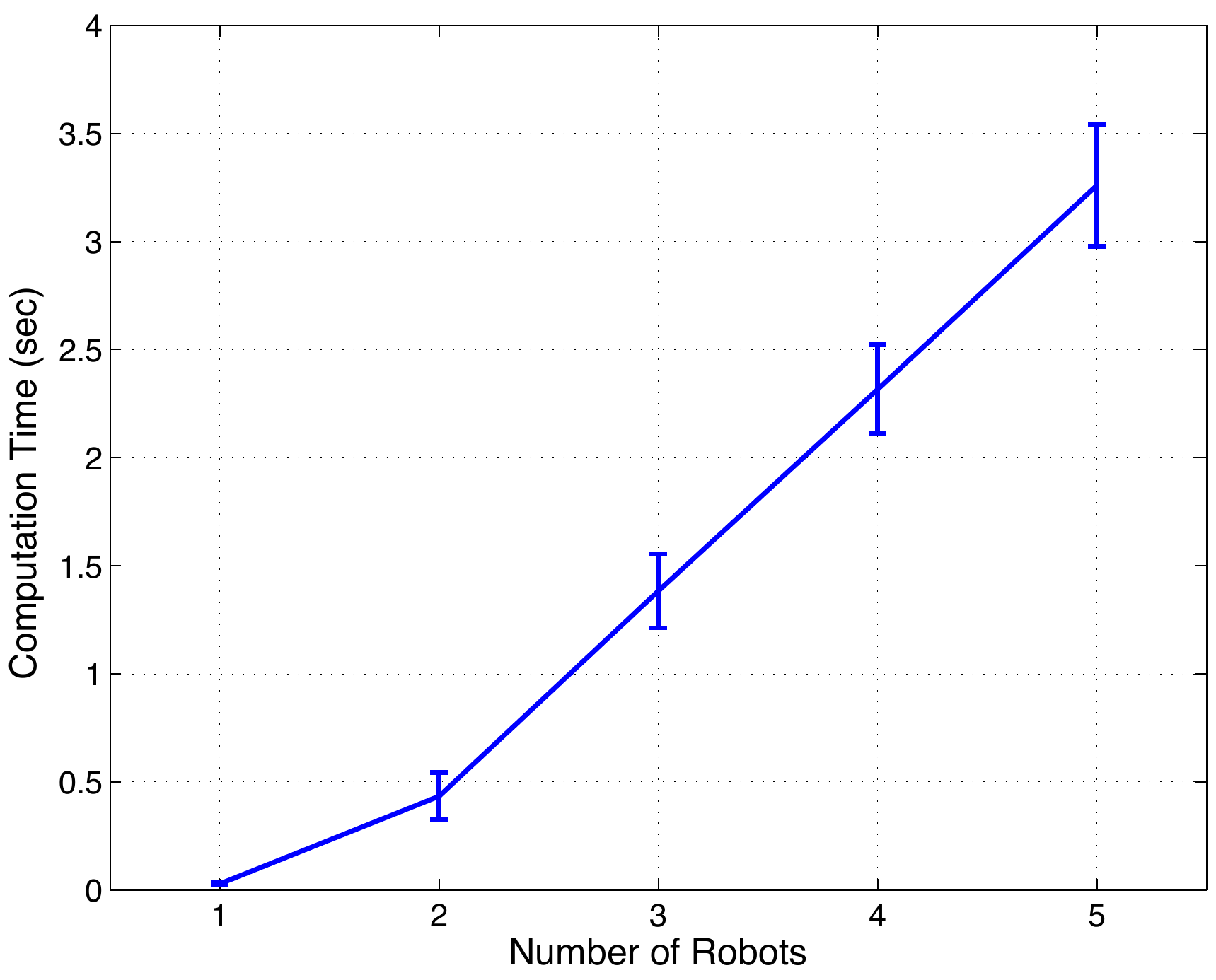}
\includegraphics[width=0.45\columnwidth]{./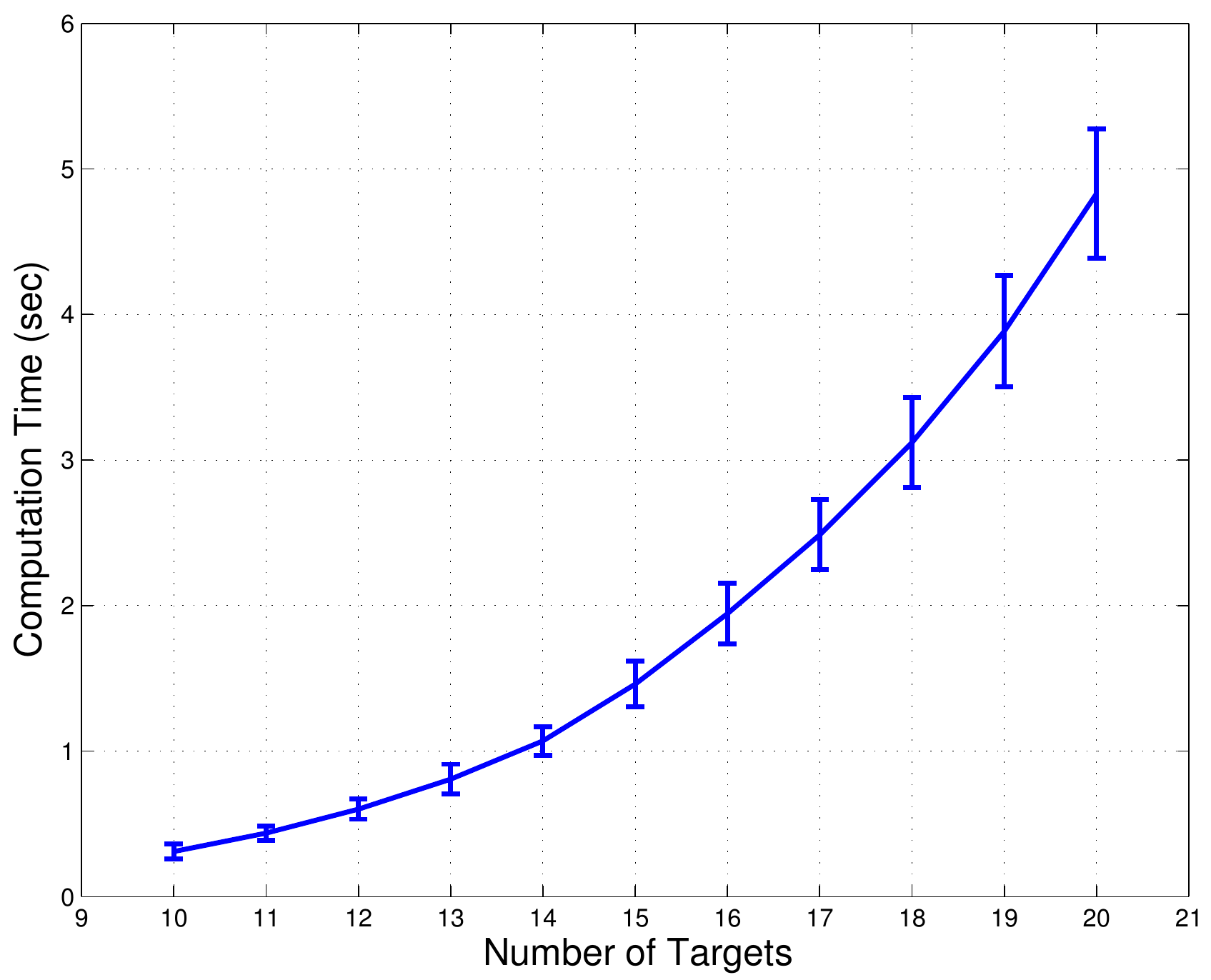}
\caption{Computation time as a function of the number of robots with 15 targets (left), and as a function of the number of targets with 3 robots (right). \label{fig:simtime}}}
\end{figure}

The simulation results are presented in Figure~\ref{fig:sims}. The plots show mean and standard deviation of the costs for 50 trials. The positions of the targets are randomly generated for each trial. All target locations are generated such that they satisfy the chain-visibility property with respect to the dashed path shown in Figure~\ref{fig:sims_env}. The measurement time $t_m$ was set to $1$\,s for all trials.

Figure~\ref{fig:simnumrobots} shows the effect of varying the number of robots on the optimal cost (makespan). 15 target locations are randomly generated for each trial. Figure~\ref{fig:simnumtargets} shows the effect of varying the number of targets. The number of robots were fixed to 3. Figure~\ref{fig:simtime} shows the computation time required for running the dynamic programming, as a function of the number of robots and the targets. 

\section{$4$--Approximation For Street Polygons}
\label{sec:street}

In this section, we present a $4$--approximation algorithm for
Problem~\ref{prob:street}. The input to the problem is a street
polygon $P$ (see Figure~\ref{fig:street} for an example) and a curve
$C$ that satisfies the chain-visibility property for all points in
$P$. Carlsson and Nilsson~\cite{carlsson1999computing} showed that
there always exists such a curve for street polygons, known as the
collapsed watchman route. They also presented an algorithm to compute
the smallest set of discrete viewpoints along such a curve to see
every point in $P$. As shown in Figure~\ref{fig:badstreet}
constructing paths directly from the optimal set of discrete
viewpoints can lead to arbitrarily worse solutions for
Problem~\ref{prob:street}. Nevertheless, in this section, we will
present an algorithm that yields a $4$--approximation starting with
the smallest set of discrete viewpoints.

Let $g_s$ and $g_t$ be the first and last points of the input curve
$C$. Let $p$ and $q$ by any points on $C$ ($p$ to the left of
$q$). Let $C[p,q]$ denote the set of all points on $C$ between $p$ and
$q$.  We use the following definition of a \emph{limit point} of a
point $p$ adapted from~\cite{carlsson1999computing}.
\begin{definition}
The limit point of a point $p$ on $C$, denoted by $lp(p)$, is defined
as the first point on $C$ to the right of $p$ such that $lp(p)$ is the
right endpoint of $C_x$ for any $x\in
closure(P\setminus\VP(C[g_s,p]))$.\footnote{The closure of a set of
  points is the union of the set of points with its boundary.}
\end{definition}
In other words, $lp(p)$ is the right endpoint of a $C_x$ closest to
$p$ and to its right, such that $x$ is not visible from any point to
the left of $p$, including $p$. Figure~\ref{fig:limitpoint} shows an
example.

\begin{figure}[htb]
\centering{ 
\includegraphics[width=0.8\columnwidth]{./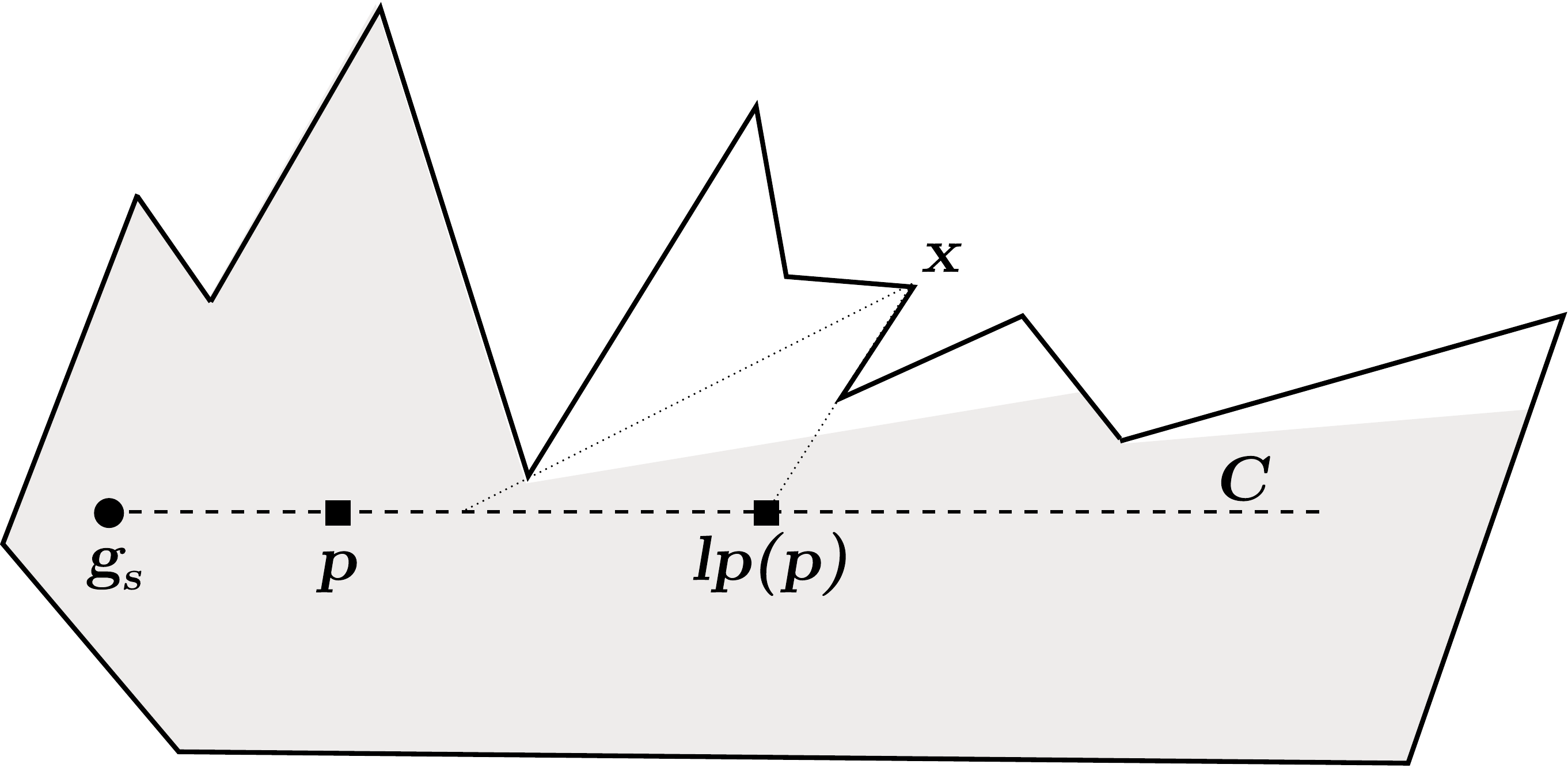}
\caption{Limit point $lp(p)$ is the leftmost point to the right of $p$
  which is also the right endpoint of $C_x$ for some $x$ not in
  $\VP(C[g_s,p])$.\label{fig:limitpoint}}}
\end{figure}

In the previous section, we implicitly used the concept of a limit
point in Line~\ref{line:q} of Algorithm~\ref{algo:optsinglepath}. The
limit point of any point $p$ given a curve $C$ can be computed
efficiently in polynomial time~\cite{carlsson1999computing}, which we
will use in our algorithm. For the first limit point, we have the
following result.
\begin{lemma}[\cite{carlsson1999computing}]
If $g_1$ is the first point on $C$ to the right $g_s$ such that $g_1$
is the right endpoint of $C_x$ for any $x\in P$, then $\VP(C[g_s,g_1])
= \VP(g_1).$
\label{lem:g1}
\end{lemma}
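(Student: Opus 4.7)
My plan is to prove the two inclusions separately. The easy direction $\VP(g_1)\subseteq\VP(C[g_s,g_1])$ is immediate from $g_1\in C[g_s,g_1]$, so all the real content lies in the reverse inclusion $\VP(C[g_s,g_1])\subseteq\VP(g_1)$.

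For this direction I will argue by contradiction. Fix any $y\in\VP(C[g_s,g_1])$; then there exists $p\in C[g_s,g_1]$ with $p\in C_y=\VP(y)\cap C$. Suppose, toward a contradiction, that $y\notin\VP(g_1)$, so $g_1\notin C_y$. By the chain-visibility property (Definition~\ref{def:chainvisibility}) applied to the point $y\in P$, the intersection $C_y$ is a connected subchain of $C$ with a well-defined right endpoint $r_y$. Since $p\in C_y$ with $p\le g_1$ in the ordering on $C$ but $g_1\notin C_y$, connectedness of $C_y$ forces $r_y<g_1$. Combined with $r_y\ge p\ge g_s$, this places $r_y$ in the half-open interval $[g_s,g_1)$.

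The contradiction now comes from the minimality in the definition of $g_1$: since $g_1$ is the first point to the right of $g_s$ that appears as the right endpoint of any $C_x$ with $x\in P$, no such right endpoint can lie in the open interval $(g_s,g_1)$. This forces $r_y=g_s$, so $C_y$ collapses to the single point $\{g_s\}$ and $y$ is seen from $C$ only at its starting endpoint.

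I expect the main obstacle to be ruling out this single-point edge case, which does not by itself contradict the definition of $g_1$. I plan to dispose of it in one of two equivalent ways: (i) a mild general-position assumption ensuring that no point of $P$ is visible from $C$ only at its starting endpoint $g_s$, or (ii) the convention that if $g_s$ itself is a right endpoint of some $C_x$ then we set $g_1=g_s$, in which case the lemma holds trivially. Under either convention $r_y\in[g_s,g_1)$ becomes impossible, contradicting $r_y<g_1$, so $g_1\in C_y$ and hence $y\in\VP(g_1)$. The proof ultimately rests on exactly the two ingredients the paper has already supplied: chain-visibility, which guarantees that $C_y$ has no gaps and therefore a well-defined right endpoint, and the minimality of $g_1$, which forbids right endpoints of any $C_x$ strictly between $g_s$ and $g_1$.
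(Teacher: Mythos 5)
Your proof is correct, and it is worth noting that the paper itself does not prove this statement: Lemma~\ref{lem:g1} is imported verbatim from Carlsson and Nilsson~\cite{carlsson1999computing}, so there is no in-paper argument to compare against. Your derivation is a clean, self-contained justification from exactly the two available ingredients: chain-visibility of $(P,C)$ (which, for a street polygon with $C$ the collapsed watchman route, holds for every point of $P$ and makes each nonempty $C_y$ a closed subinterval of $C$ with a well-defined right endpoint $r_y$), and the minimality of $g_1$ among right endpoints. The connectedness step is sound: if $C_y$ contained $p\leq g_1$ and also some point to the right of $g_1$ without containing $g_1$, it would be disconnected, so $g_1\notin C_y$ forces $r_y<g_1$, which the minimality of $g_1$ forbids. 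The one genuine loose end you identify --- a target whose $C_y$ degenerates to $\{g_s\}$, visible only from the very first point of the curve --- is real, and it is an artifact of whether ``first point to the right of $g_s$'' is read as $\geq g_s$ or $>g_s$; under the inclusive reading (or the convention that $g_1=g_s$ when $g_s$ is itself a right endpoint, which is how the source's limit-point machinery behaves), the case vanishes and your contradiction closes. Flagging and disposing of that boundary convention explicitly is appropriate, and arguably makes your write-up more careful than the citation the paper relies on.
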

Carlsson and Nilsson~\cite{carlsson1999computing} also presented an
algorithm to compute the first viewpoint $g_1$ satisfying the above
condition. 

It is easy to see that any solution where the first path starts before
$g_1$ can be converted to another valid solution of equal or less cost
where the first path starts at $g_1$. The subroutine given in
Algorithm~\ref{algo:streetsub} starts with $g_1$ to compute $m$
paths. Let $T^*$ be the cost of the optimal solution for some instance
of Problem~\ref{prob:street}. The subroutine takes as input a guess
for $T^*$, say $\hat{T}$. We first compute the smallest set of
viewpoints $G^*$ sufficient to see every point in the environment
using~\cite{carlsson1999computing}. The rest of the algorithm
constructs $m$ paths such that the cost of each path is at most
$4\hat{T}$.

Lemma~\ref{lem:guess} shows if the set of paths computed by the
algorithm does not see every point in $P$, then our guess for $T^*$ is
too small. Thus, we can start with a small initial guess for $T^*$,
say $\hat{T}=t_m$, and use binary search to determine the optimal
value $T^*$. Since all tours computed cost less than $4\hat{T}$, we
obtain a $4$--approximation when our guess $\hat{T}=T^*$.

\begin{algorithm}
\DontPrintSemicolon
\KwIn{$P$: a street polygon}
\KwIn{$C$: collapsed watchman route from $g_s$ to $g_t$}
\KwIn{$t_m$: measurement cost per viewpoint}
\KwIn{$\hat{T}$: guess for the cost of optimal solution}
$G^*\leftarrow\{g_i\}$: smallest set of viewpoints on $C$ (\cite{carlsson1999computing})\;
$l\leftarrow g_1$\;
\For{$i=1$ to $m$}{
$r\leftarrow$ point along $C$ length $\hat{T}$ away from $l$ (set to
  $g_t$ if no such point exists)\;
$V_i'\leftarrow\{g_i|g_i\in G^*, l<g_i<r\}$\;
\If{$|V_i'|> \dfrac{\hat{T}}{t_m}$}{
$V_i'\leftarrow$ first $\lceil\hat{T}/t_m\rceil$ viewpoints in
  $V_i'$\;
$r\leftarrow$ last point in $V_i'$\;
}
$\Pi_i\leftarrow$ path starting at $l$ and ending at $r$\;
$V_i=\{l\}\cup V_i'\cup\{r\}$\tcp*{viewpoints on $\Pi_i$}
$l\leftarrow lp(r)$\;
}
return \textsc{Success} if $\cup\VP(V_i)=P$, \textsc{Failure} otherwise.
\caption{\textsc{StreetSubroutine}\label{algo:streetsub}}
\end{algorithm}

We first prove that the discrete set of viewpoints computed by
Algorithm~\ref{algo:streetsub} are correct.
\begin{lemma}
Let $r$ be the last point on the last path and $V=\cup V_i$ be the set
of all viewpoints over all paths given by
Algorithm~\ref{algo:streetsub}. If a point $x\in P$ is visible from
$C[g_s,r]$, then $x$ is also visible from the discrete set of
viewpoints $V$.
\label{lem:correctsub}
\end{lemma}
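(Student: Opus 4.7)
The plan is to prove, by induction on $i\in\{1,\ldots,m\}$, the stronger statement that $V_1\cup\cdots\cup V_i$ sees every point $x\in P$ whose visibility segment $C_x:=\VP(x)\cap C$ meets $C[g_s,r_i]$, where $r_i$ denotes the value of $r$ at the end of iteration $i$ of Algorithm~\ref{algo:streetsub}; specializing to $i=m$ yields the lemma. The structural fact that drives the induction is that, even after the possible truncation in Algorithm~\ref{algo:streetsub}, $V_i$ contains $l_i$, $r_i$, and every element of $G^*\cap(l_i,r_i]$ (when truncation fires, the updated $r_i$ is by construction the last retained element of $G^*$). Chain-visibility is used throughout so that $C_x$ may be treated as a single connected sub-arc of $C$ with well-defined left and right endpoints.

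For the base case $i=1$, fix $x$ visible from $C[g_s,r_1]$. If the right endpoint of $C_x$ is at most $g_1$, then Lemma~\ref{lem:g1} yields $\VP(C[g_s,g_1])=\VP(g_1)$, so $g_1\in V_1$ covers $x$. If $C_x$ contains either path endpoint $g_1$ or $r_1$, those elements of $V_1$ see $x$ directly. The only remaining case is $C_x\subseteq(g_1,r_1]$; since $G^*$ sees every point of $P$, some $g_k\in G^*$ lies in $C_x\subseteq(g_1,r_1]$, and the structural fact above places $g_k\in V_1$.

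For the inductive step, assume the claim for $i$ and take $x$ visible from $C[g_s,r_{i+1}]$. If $x$ is already visible from $C[g_s,r_i]$, the hypothesis applies. Otherwise the left endpoint of $C_x$ lies strictly to the right of $r_i$, and the definition of the limit point $l_{i+1}=lp(r_i)$ forces the right endpoint of $C_x$ to be at least $l_{i+1}$. Then $C_x$ either contains $l_{i+1}$ (covered by $l_{i+1}\in V_{i+1}$), straddles $r_{i+1}$ (covered by $r_{i+1}\in V_{i+1}$), or is contained in $(l_{i+1},r_{i+1}]$ (covered by the element of $G^*$ inside $C_x$, via the structural fact applied to iteration $i+1$).

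The main obstacle I anticipate is the bookkeeping around truncation: both $r_i$ and $l_{i+1}=lp(r_i)$ must refer to the post-truncation value of $r_i$, and one has to verify that the invariant $V_i\supseteq G^*\cap(l_i,r_i]$ survives truncation. Once this is pinned down, chain-visibility and the defining property of $lp(\cdot)$ together align Carlsson and Nilsson's minimum covering set $G^*$ with the viewpoints emitted by Algorithm~\ref{algo:streetsub}, and the induction closes.
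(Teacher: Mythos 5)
Your proof is correct and rests on the same three ingredients as the paper's argument: Lemma~\ref{lem:g1} for the prefix up to $g_1$, the fact that any $C_x$ contained strictly inside a path must contain a point of $G^*$ (and hence of $V_i$), and the limit-point definition to rule out a $C_x$ falling in the gap between consecutive paths. The paper packages this as a contradiction ("a missed $C_x$ would lie strictly between $g_1$ and $r$") and defers the case analysis to the proof of Lemma~\ref{lem:optsinglepath}, whereas you make it an explicit induction on the path index with the invariant $V_i\supseteq G^*\cap(l_i,r_i]$; yours is a complete, self-contained writeup of what the paper only sketches, including correct handling of the truncation step.
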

\begin{proof}
Suppose there is a point $x$ visible from $C[g_s,r]$ which is not
visible from any point in $V$. Let $x_l$ and $x_r$ be the left and
right endpoints of $C_x$. $x_l$ must lie to the left of $r$ since $x$
is visible from $C[g_s,r]$. Similarly, $x_r$ must lie to the left of
$r$ otherwise since $r\in V$, $x$ will be covered by $r$. From
Lemma~\ref{lem:g1} and the fact that $g_1\in V$, $x_l$ must lie to the
right of $g_1$. Thus we have $g_1<x_l\leq x_r < r$. We omit the rest
of the proof since it is similar to that of
Lemma~\ref{lem:optsinglepath}.


\end{proof}

\begin{lemma}
Let $\Pi$ be the set of paths computed using
Algorithm~\ref{algo:streetsub} and $r$ be the last endpoint of the
rightmost path. Let $\Pi'$ be any set of $m$ paths with $r'$ the last
endpoint of the rightmost path, such that all points visible from
$C[g_s,r']$ are covered by $\Pi'$. If the cost of $\Pi'$ is at most
$\hat{T}$, then $r'$ cannot be to the right of $r$.
\label{lem:right}
\end{lemma}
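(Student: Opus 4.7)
The plan is to prove by induction on $i$ that $r_i \geq r'_i$, where $r_i$ and $r'_i$ are the right endpoints of the $i$-th paths of $\Pi$ and $\Pi'$ (ordered left-to-right along $C$); the lemma follows by taking $i = m$. Two ingredients drive the argument: the greedy Carlsson--Nilsson viewpoint set $G^* = \{h_1, h_2, \ldots, h_K\}$ with $h_1 = g_1$ and $h_{j+1} = lp(h_j)$, and the existence for each $j$ of a witness target $x^{(j)}$ whose visibility chain satisfies $C_{x^{(j)}} \subseteq (h_{j-1}, h_j]$ (with $h_0 := g_s$). These witness chains are pairwise disjoint, so any single viewpoint covers at most one $x^{(j)}$.

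For the base case, the chain $C_{x^{(1)}}$ lies in $[g_s, g_1]$ and therefore can be intersected only by $\Pi'_1$, forcing $l'_1 \leq g_1$; Lemma~\ref{lem:g1} then lets us reduce WLOG to $l'_1 = g_1$. The cost constraint yields $r'_1 - g_1 \leq \hat T$ and $|V'_1| \leq \hat T/t_m$. If Algorithm~\ref{algo:streetsub} is length-limited on the first path, then $r_1 = g_1 + \hat T \geq r'_1$ and we are done. Otherwise, with $k = \lceil \hat T/t_m \rceil$, the algorithm is count-limited and $r_1 = h_{k+1}$. Suppose toward contradiction $r'_1 > r_1$; then the $k+1$ disjoint chains $C_{x^{(1)}}, \ldots, C_{x^{(k+1)}}$ all lie inside $[g_s, r'_1]$, and every later path $\Pi'_j$ with $j>1$ starts strictly right of $\Pi'_1$'s endpoint and hence misses all of them. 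So $\Pi'_1$ alone must cover the $k+1$ witness targets, requiring $|V'_1| \geq k+1$, which contradicts $|V'_1| \leq k$.

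For the inductive step, assume $r_{i-1} \geq r'_{i-1}$ and set $s_i = lp(r_{i-1}) = h_p$. For every $a \geq p+1$, the chain $C_{x^{(a)}}$ has left endpoint strictly greater than $h_p$ and so cannot be intersected by any $\Pi'_j$ with $j < i$ (all such paths terminate by $r'_{i-1} \leq r_{i-1} < h_p$); the coverage of $x^{(p+1)}, x^{(p+2)}, \ldots$ is therefore the responsibility of $\Pi'_i, \ldots, \Pi'_m$. An analog of Lemma~\ref{lem:g1} applied at $s_i$ reduces us to $l'_i = s_i$, and the same length/count dichotomy as in the base case then applies: if the algorithm is length-limited, $r_i = s_i + \hat T \geq r'_i$; and if count-limited with $r_i = h_{p+k}$, then $r'_i > r_i$ would force $\Pi'_i$ to cover the $k+1$ disjoint chains $C_{x^{(p+1)}}, \ldots, C_{x^{(p+k+1)}}$ on its own---any later $\Pi'_j$ starts at $l'_j \geq r'_i > h_{p+k}$ and so misses all these chains---again contradicting $|V'_i| \leq k$.

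The main obstacle is the inductive reduction to $l'_i = s_i$. In the base case, $x^{(1)}$ can only be covered by $\Pi'_1$, which pins down $l'_1 \leq g_1$ immediately; in the inductive step the analogous ``boundary'' target $x^{(p)}$, whose chain ends at $s_i$, might be covered by $\Pi'_{i-1}$ via points in $(h_{p-1}, r'_{i-1}]$, so we do not get $l'_i \leq s_i$ for free. The fix is to anchor the counting contradiction entirely on the ``fresh'' chains $C_{x^{(p+1)}}, \ldots, C_{x^{(p+k+1)}}$, whose left endpoints genuinely exceed $h_p$ and which therefore cannot be touched by any path ending at or before $r_{i-1}$. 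A secondary subtlety is that the Carlsson--Nilsson minimality needed for the counting step applies to covering all of $P$; its extension to the prefix $C[g_s, r'_i]$ used here follows from the inductive nature of the greedy construction in~\cite{carlsson1999computing}, but should be stated and used explicitly.
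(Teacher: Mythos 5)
Your proof is correct and follows essentially the same route as the paper's: induction on the path index with a length-limited/count-limited dichotomy at each step, your explicit disjoint witness chains merely spelling out what the paper compresses into ``from the optimality of $G^*$.'' The one loose end is the length-limited inductive case, where you still need $l'_i \le s_i$ (so that $r'_i \le l'_i + \hat T \le s_i + \hat T = r_i$) and your fresh-chain fix only covers the count-limited case; the paper closes this with a tool you already have --- the limit-point witness $x$ with $C_x \subseteq (r_{i-1}, s_i]$ cannot be covered by $\Pi'_1,\ldots,\Pi'_{i-1}$ (which end by $r'_{i-1}\le r_{i-1}$), so some later path, and hence $\Pi'_i$, must start at or before $s_i$.
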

\begin{proof}
From Lemma~\ref{lem:g1} and the definition of limit point, we can say
that the first path in $\Pi'$ starts from $g_1$. We will prove the
lemma by induction on the index of the path. Specifically, we will
show that the right endpoint of the $i^{th}$ path in $\Pi$, say
$\Pi_i$, cannot be to the left of the right endpoint of the $i^{th}$
path in $\Pi'$, say $\Pi_i'$. For ease of notation, we will refer to
the left endpoint of the $i^{th}$ path by $l_i$ and correspondingly
$r_i$. 

\textbf{Base case.} We have two possibilities: (i) $r_1$ is $\hat{T}$
away from $g_1$, (ii) $\Pi_1$ contains at least
$\lceil\hat{T}/t_m\rceil$ viewpoints from $G^*$. For (i), since the
cost of $\Pi_1'$ is at most $\hat{T}$, its length cannot be greater
than $\hat{T}$. Hence, $r_1'$ cannot be to the right of $r_1$. For
(ii), suppose $r_1'$ is to the right of $r_1$. Then $\Pi_i'$ must
contain at least $\lceil\hat{T}/t_m\rceil$ viewpoints from the
optimality of $G^*$. Thus, the cost of $\Pi_1'$ is greater than
$\hat{T}$ which is a contradiction. The base of the induction holds.

\textbf{Inductive step.} Suppose that $r_{i-1}'$ is to the left or
coincident with $r_{i-1}$. We claim that $l_{i-1}'$ must be to the
left or coincident with $l_i$. Suppose not. By construction,
$l_i=lp(r_{i-1})$. Let $x$ be a point such that $l_i$ is the right
endpoint of $C_x$ and $x$ is not visible from $C[g_s,r_{i-1}]$. Such
an $x$ always exists according to the definition of a limit
point. Hence, $C_x$ lies completely between $r_{i-1}'$ and $l_i'$
implying $x$ is not covered by $\Pi'$, which is a
contradiction. Hence, $l_{i}'$ is to the left or coincident with
$l_i$. Now, using an argument similar to the base case, we can show
that $r_i'$ cannot be to the right of $r_i$ proving the induction.

Hence, for all $i$, $r_i'$ cannot be to the right of $r_i$ proving the
lemma.

\end{proof}

\begin{lemma}
If Algorithm~\ref{algo:streetsub} returns \textsc{Failure} then
$\hat{T} < T^*$.
\label{lem:guess}
\end{lemma}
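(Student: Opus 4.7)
The plan is to argue by contrapositive: show that if $\hat{T}\geq T^*$, then Algorithm~\ref{algo:streetsub} returns \textsc{Success}. So fix an optimal solution $\Pi^*$ for Problem~\ref{prob:street} with cost exactly $T^*\leq\hat{T}$, and let $r^*$ denote the right endpoint of the rightmost path in $\Pi^*$. Since $\Pi^*$ is a feasible solution, it covers every point of $P$, and since all paths in $\Pi^*$ are subsets of $C[g_s,r^*]$, every point of $P$ is in particular visible from $C[g_s,r^*]$.

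Next, I would apply Lemma~\ref{lem:right} with the role of $\Pi'$ played by $\Pi^*$. The hypotheses of that lemma are satisfied: the cost of $\Pi^*$ is at most $\hat{T}$, and every point visible from $C[g_s,r^*]$ is covered by $\Pi^*$ (indeed all of $P$ is). The conclusion is that $r^*$ cannot lie strictly to the right of $r$, where $r$ is the right endpoint of the rightmost path produced by Algorithm~\ref{algo:streetsub}. Consequently $C[g_s,r^*]\subseteq C[g_s,r]$, so every point of $P$ is visible from $C[g_s,r]$.

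Finally, I would invoke Lemma~\ref{lem:correctsub}, which states that any point of $P$ visible from $C[g_s,r]$ is also visible from the discrete viewpoint set $V=\cup V_i$ returned by the algorithm. Combining these observations, $\cup\VP(V_i)=P$, so the algorithm returns \textsc{Success} rather than \textsc{Failure}. Contrapositively, if the algorithm returns \textsc{Failure} then $\hat{T}<T^*$, which is exactly the claim.

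The main obstacle is conceptual rather than technical: one has to be careful that Lemma~\ref{lem:right} genuinely applies to the optimal solution. The subtlety is that the lemma's hypothesis is phrased in terms of covering only those points visible from $C[g_s,r']$, not necessarily all of $P$; but since $\Pi^*$ covers all of $P$, this weaker coverage requirement is automatic. The rest of the argument is a straightforward chain of the two prior lemmas.
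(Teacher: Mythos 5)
Your proof is correct and follows essentially the same route as the paper's: both invoke Lemma~\ref{lem:right} to conclude that $r^*$ is not to the right of $r$, and then Lemma~\ref{lem:correctsub} to transfer visibility from $C[g_s,r]$ to the discrete viewpoint set. The only cosmetic difference is that you argue by contrapositive while the paper argues by contradiction; your explicit check that the hypothesis of Lemma~\ref{lem:right} is satisfied by the optimal solution is a nice touch but does not change the substance.
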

\begin{proof}
Suppose $T^*\leq \hat{T}$. Let $\Pi^*$ be the optimal set of $m$
paths. Let $r$ and $r^*$ be the rightmost points on $\Pi$ and $\Pi^*$
respectively. From Lemma~\ref{lem:right}, we know $r^*$ cannot be to
the right of $r$.

Algorithm~\ref{algo:streetsub} returns \textsc{Failure} when there
exists some point, say $x\in P$, not covered by $\Pi$. From
Lemma~\ref{lem:correctsub}, $x$ must not be visible from
$C[g_s,r]$. Thus, $x$ cannot be visible from $C[g_s,r^*]$. Hence, the
optimal set of paths do not see every point in $P$ which is a
contradiction. 
\end{proof}

We now bound the cost of the solution produced by our subroutine.
\begin{lemma}
The maximum cost of any path produced by
Algorithm~\ref{algo:streetsub} is at most $4\hat{T}$.
\end{lemma}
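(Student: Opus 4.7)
The plan is to bound the cost of each path $\Pi_i$ constructed in one iteration of the \texttt{for} loop of Algorithm~\ref{algo:streetsub}, by splitting the cost into its travel and measurement components and handling the two branches of the inner \texttt{if} separately.

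First I would bound the travel length $l(\Pi_i)$. In the branch where truncation does not occur, $r$ is placed at most distance $\hat T$ from $l$ by construction (or at $g_t$, which is only closer). In the branch where truncation occurs, the new $r$ is the last point of a subset of viewpoints that already lay strictly between $l$ and the original $r$, so it is also within distance $\hat T$ of $l$. Hence $l(\Pi_i)\le \hat T$ in both cases.

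Next I would bound the measurement cost $|V_i|t_m$. Recall $V_i=\{l\}\cup V_i'\cup\{r\}$, so $|V_i|\le |V_i'|+2$. In the non-truncation branch, the guard of the \texttt{if} gives $|V_i'|\le \hat T/t_m$ directly. In the truncation branch, $|V_i'|=\lceil \hat T/t_m\rceil$ and, crucially, $r$ is reset to the last element of $V_i'$, so in fact $|V_i|\le |V_i'|+1\le \lceil \hat T/t_m\rceil+1$. Either way $|V_i|\,t_m\le \hat T+2t_m$.

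Adding the two bounds yields a per-path cost of at most $2\hat T+2t_m$. To reach the desired $4\hat T$ one then invokes $\hat T\ge t_m$, which is maintained throughout the binary search because the initial guess is $\hat T=t_m$ and no smaller value is ever tried (indeed any non-trivial instance has $T^*\ge t_m$, since at least one measurement is required). This slack between $2\hat T+2t_m$ and $4\hat T$ is where the constant $4$ in the approximation ratio comes from; nothing sharper than $2$ plus a $+2t_m$ additive term is delivered by the algorithm itself. The only step that requires even a moment's care is the truncation branch, where one must remember that resetting $r$ to an element already in $V_i'$ prevents double-counting and keeps $|V_i|$ inside the claimed envelope; the rest is direct substitution.
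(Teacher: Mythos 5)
Your proof is correct and follows essentially the same decomposition as the paper's (length bounded by $\hat{T}$ by construction, plus a measurement count of roughly $\hat{T}/t_m + 2$), just with more explicit bookkeeping. In fact your version is slightly sharper: you make visible the assumption $\hat{T}\ge t_m$ that the paper's inequality $\lceil\hat{T}/t_m\rceil + 2\le 3\hat{T}/t_m$ silently relies on, and you correctly justify it from the starting point of the binary search.
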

\begin{proof}
By construction, the length of any path is at most $\hat{T}$. The
number of measurements are at most $\lceil\hat{T}/t_m\rceil + 2\leq
3\hat{T}/t_m$. Hence, the cost of any path is at most $4\hat{T}$.
\end{proof}

Combining all the above lemmas, we can state our main result for this
section.
\begin{theorem}
There exists a $4$--approximation for Problem~\ref{prob:street}.
\end{theorem}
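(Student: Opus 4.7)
The plan is to invoke Algorithm~\ref{algo:streetsub} inside an outer search over the guess $\hat{T}$, and then combine the three lemmas already proved about the subroutine. Concretely, I would proceed as follows.

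First, I would argue that the optimal cost $T^*$ takes values in a polynomial-size discrete set. Every endpoint of a path in an optimal solution can be assumed to lie at a structurally interesting point on $C$ (the left/right endpoint of some $C_x$, or a viewpoint in $G^*$), and the measurement contribution is an integer multiple of $t_m$. Consequently, the makespan $T^*$ equals $\ell + k t_m$ for a path length $\ell$ between two such interesting points and some integer $k\le|G^*|$. There are only polynomially many such candidate values, and one of them must equal $T^*$.

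Next, I would run Algorithm~\ref{algo:streetsub} on these candidates and select the smallest $\hat{T}$ for which it returns \textsc{Success}; this can be done by binary search (or linear scan) over the candidate set. By the contrapositive of Lemma~\ref{lem:guess}, if $\hat{T}\ge T^*$ then the subroutine must return \textsc{Success}, so in particular it succeeds at $\hat{T}=T^*$. Hence the smallest successful guess, call it $\hat{T}^\star$, satisfies $\hat{T}^\star\le T^*$. The final lemma of the section bounds the cost of every path produced by Algorithm~\ref{algo:streetsub} by $4\hat{T}^\star$, giving a solution whose makespan is at most $4\hat{T}^\star\le 4T^*$. Polynomial running time follows because each invocation of the subroutine is polynomial (limit points can be computed in polynomial time \cite{carlsson1999computing}), and we invoke it a polynomial number of times.

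The main subtlety I would expect is in formalizing the finite candidate set for the binary search so that we really can attain a guess with $\hat{T}^\star\le T^*$; everything geometric has already been done in the supporting lemmas, so once this discretization argument is pinned down, the $4$-approximation follows by simply chaining the inequalities above.
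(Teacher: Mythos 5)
Your proposal is correct and follows essentially the same route as the paper: guess $\hat{T}$, run \textsc{StreetSubroutine}, use Lemma~\ref{lem:guess} to conclude that failure implies $\hat{T}<T^*$, binary-search down to the smallest successful guess, and apply the $4\hat{T}$ cost bound. The only difference is that you make explicit a finite candidate set for $T^*$ so the binary search terminates exactly, a point the paper glosses over (it instead allows terminating the search early at a small loss in the approximation factor); your sketch of that discretization would still need justification since in Problem~\ref{prob:street} the targets are all points of $P$, but the overall argument is the paper's.
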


The minimum number of discrete viewpoints $|G^*|$ can be significantly
larger than the number of vertices of the polygon. $G^*$ can be
computed in time polynomial in the input size (i.e., the number of
sides in the input polygon) and the output size (i.e., $|G^*|$).
Consequently, the running time for each invocation of the subroutine
in Algorithm~\ref{algo:streetsub} is also polynomial in the input and
output size. The optimal value $T^*$ can be computed in $\bigO(\log
T^*)$ invocations of the subroutine via binary search. We can reduce
the overall running time by terminating the binary search early, at
the expense of the approximation factor. 

\section{General Solution}	\label{sec:general}
In this section, we address the general case for the multi-robot watchmen route problem. We remove the restriction of street polygons and requiring a chain-visible curve as input. However, the added generality comes at the expense of some relaxations. We assume that a finite set of candidate measurement locations, $V$, is given as input. The goal is to find tours for each robot visiting a subset of $V$ such that they collectively see all the targets. Note that there is no further assumption (\eg, chain-visibility) on $V$. Consequently, $V$ can be computed by simply discretizing the environment either uniformly or using the strategy in Deshpande et al.~\cite{deshpande2007pseudopolynomial}. Instead of minimizing the maximum times of the tour, we resort to minimizing the sum of the length of all tours (\ie, $t_m=0$). 

Our contribution is to show how to solve this general version of the multi-robot watchman route problem by reducing it to a TSP instance. The resulting TSP instance is not necessarily metric, and consequently existing polynomial time approximation algorithms cannot be directly applied. Instead, we directly find the optimal TSP solution leveraging sophisticated TSP solvers (\eg, concorde~\cite{applegate2006concorde}). This is motivated by recent work by Mathew et al.~\cite{mathew2015multirobot} who used a similar approach for solving a multi-robot rendezvous problem. We demonstrate that this algorithm finds the optimal solution faster than directly solving the original problem using an Integer Linear Programming solver.

In the following we describe our reduction and prove its correctness. We also present empirical results from simulations for a 2D case. However, the following algorithm also works for 3D problems and can incorporate additional sensing range and motion constraints.

\subsection{Reduction to TSP}

Let $P$ denote the environment in which the targets points $X$ are located. We are given a set of candidate viewpoints $V$ within $P$. We denote the $j^{th}$ viewpoint by $v_j\in V$ where $j=\{1,...,n\}$. For each target $x_i\in X$, we create cluster of viewpoints, $C_i=\{v_j\mid v_i\subseteq V \text{ can see } x_i\}$. Without loss of generality, we assume that $V$ is such that each target $x_i$ is seen from at least one viewpoint (\ie, $|C_i| \geq 1$ for all $i$). One way of generating a valid set $V$ is by sampling or discretizing the visibility polygons for all $x_i$. For the special case, when $X$ is the set of all points in a 2D polygon, we can generate $V$ by imposing a grid inside $P$ using the strategy from~\cite{deshpande2007pseudopolynomial}.

If a robot visits any viewpoint in cluster $C_i$, then it ensures that target $x_i$ is seen by the corresponding robot. Therefore, the goal is to find a set of tours, one per robot, such that at least one viewpoint in each $C_i$ is visited. Note that the clusters need not be disjoint. This problem is equivalent to the multi-robot Generalized Traveling Salesman Problem (GTSP)~\cite{Lien1993}.  

The input to GTSP is a set of clusters, each containing one or more nodes from a connected graph. The goal in single robot GTSP is to find the minimum length tour that visits at least one node in each cluster.\footnote{There are versions of GTSP with additional restrictions of visiting exactly one node in each cluster or visiting each cluster exactly once. We consider the less restrictive version where the robot is allowed to visit a cluster multiple times while still ensuring a specific node is visited no more than once.} GTSP is NP-hard~\cite{Lien1993} since it generalizes TSP.  

Noon and Bean~\cite{noon1993efficient} and Lien and Ma~\cite{Lien1993} presented two polynomial time reductions of GTSP into a TSP instance such that the optimal TSP tour yields the optimal GTSP solution. The resulting TSP instance is not necessarily metric and consequently the standard approximation algorithms (\eg,~\cite{arora1996polynomial}) cannot be applied. We can find the optimal solution of the TSP instance directly using potentially exponential time algorithms. A number of sophisticated implementations have been developed for solving large TSP instances to optimality~\cite{applegate2006concorde}. In particular, we use the \emph{concorde} solver which yields the best-known solutions to large TSP instances~\cite{applegate2011traveling}. In Section~\ref{sec:compstudies}, we compare this approach with a generic Integer Linear Programming solver.

The reductions from GTSP to TSP proposed by Noon and Bean~\cite{noon1993efficient} and Lien and Ma~\cite{Lien1993} are for finding a single robot GTSP tour. In our case, we are interested in finding $m$ tours -- one for each of the $m$ robots.  Mathew et al.~\cite{mathew2015multirobot} presented a multi-robot extension for the Noon-Bean transformation. This transformation is applicable for the case when the clusters in the GTSP instance are disjoint (\ie, $|C_i \cap C_j| = 0$, for all $i\neq j$). With slight modification, we can apply the transformation to the case of possibly overlapping clusters as given below. 

We assume that the path for the $i^{th}$ robot must start at a specific vertex, $v_d^i$, and end at a specific vertex $v_f^i$. We model three scenarios:
\begin{enumerate}
\item \textsc{SameDepot}: All robots start and finish their tours at the same location. That is, $v_d^i = v_d^j = v_f^i = v_f^j$ for all $i$ and $j$.
\item \textsc{SameFinishDepot}: All robots finish their tours at the same location but may have unique starting locations. That is, $v_f^i = v_f^j$ for all $i$ and $j$.
\item \textsc{InterchangeableDepots}: There are $m$ fixed depots and initially there is one robot at each depot. The robots can end their paths at any of the $m$ depots with the restriction that each depot must have one robot at the end.
\end{enumerate}

For all the scenarios, the algorithm given below finds $m$ tours such that the sum of the lengths of all tours is minimized and each target is seen.

\begin{figure*}[tb]
\centering{
\subfigure[Input Instance\label{fig:map_noon}]{\includegraphics[width=0.3\textwidth]{./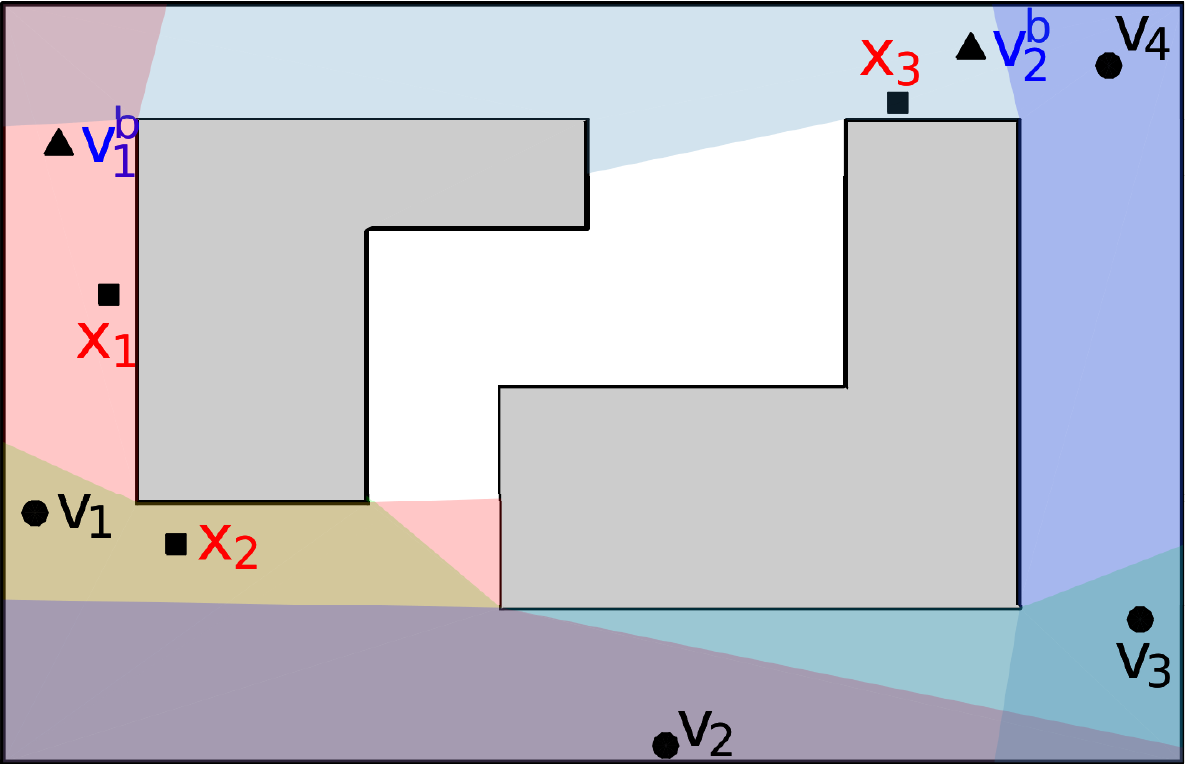}}
\subfigure[Input graph\label{fig:Given_Graph}]{\includegraphics[width=0.3\textwidth]{./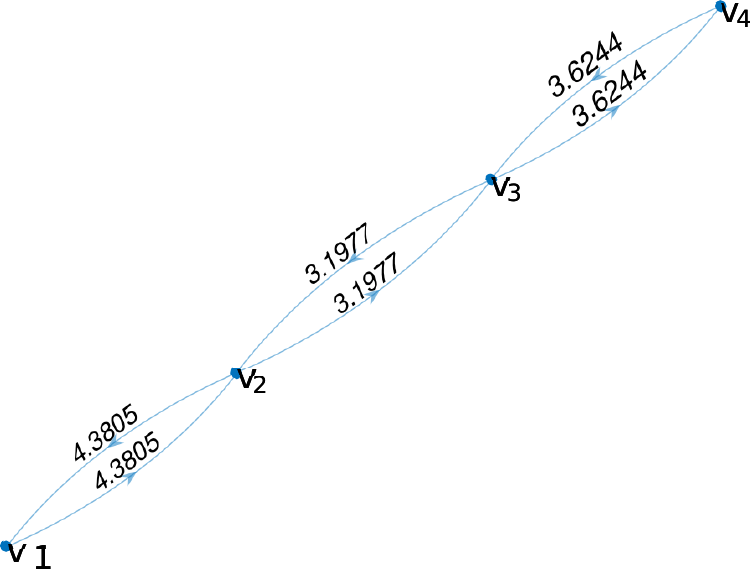}}
\subfigure[Redundant nodes removed\label{fig:comp_orphan_remove}]{\includegraphics[width=0.3\textwidth]{./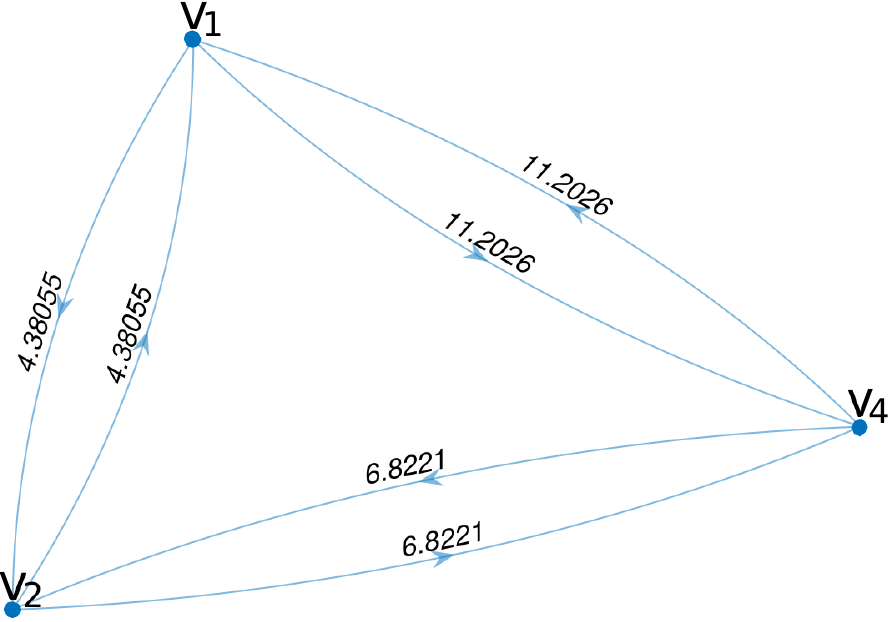}}
\subfigure[Duplicate nodes added to create non-intersecting clusters\label{fig:duplicate_node}]{\includegraphics[width=0.3\textwidth]{./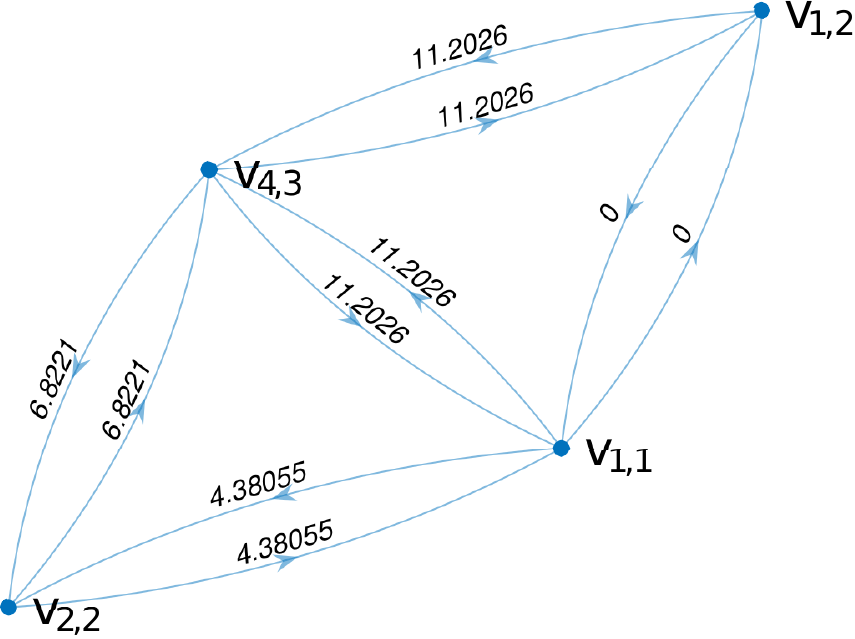}}
\subfigure[Noon-Bean transformation with penalty added and tail-shifted (red) edges.\label{fig:noonbeaned}]{\includegraphics[width=0.3\textwidth]{./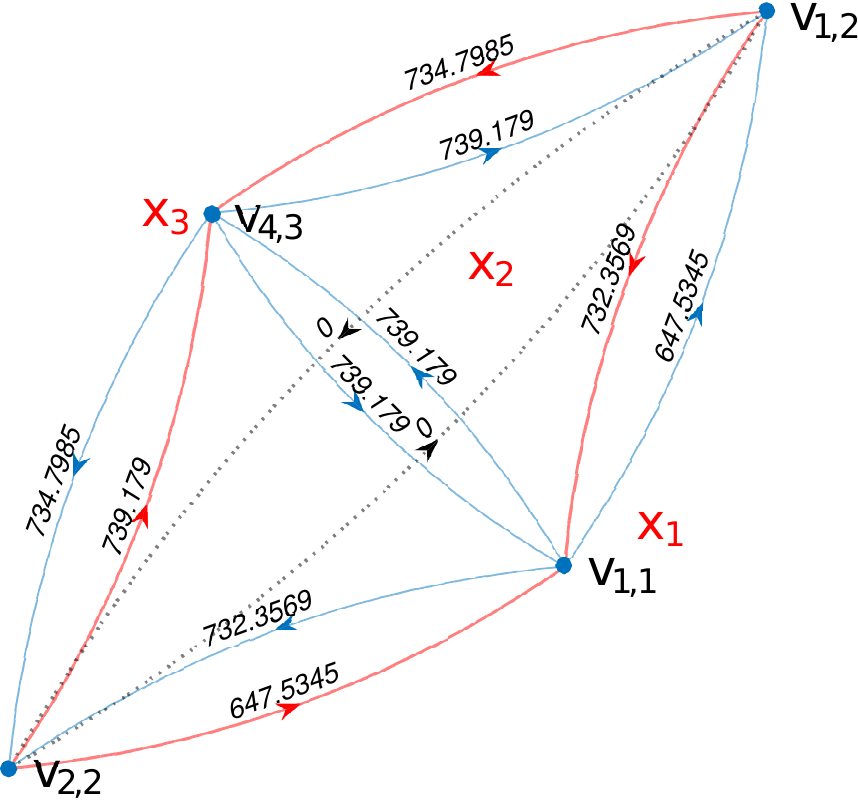}}
\subfigure[Robot depots added to the graph. Outgoing depot edges are shown in green, inter-cluster tail shifted edges are shown in red, incoming depot edges are shown in black solid, incoming tail-shifted depot edges are shown in magenta, intracluster zero cost edges are shown as dotted-black.\label{fig:G_atsp_final}]{\includegraphics[width=0.3\textwidth]{./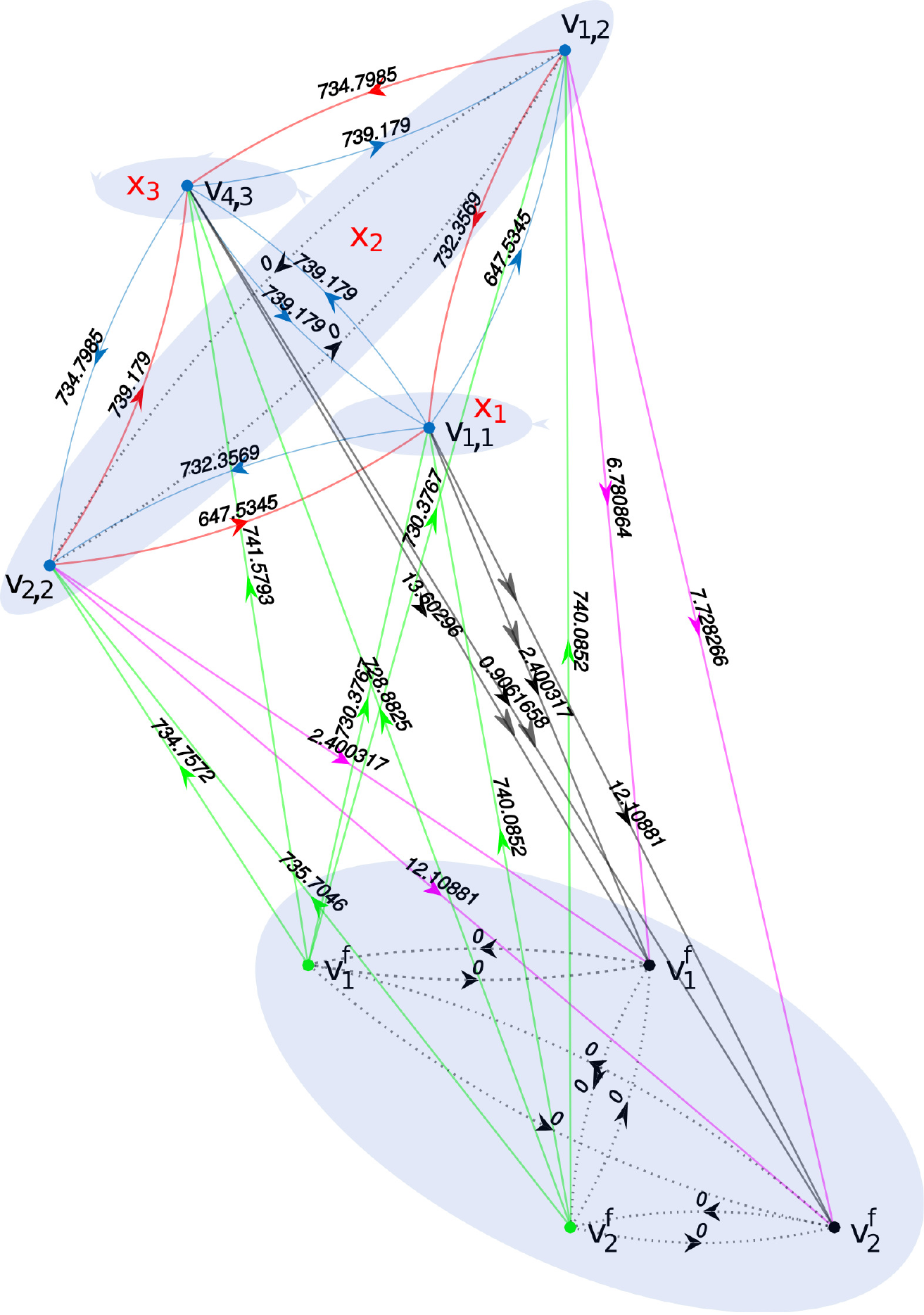}}
\caption{Transforming the multi-robot watchmen route problem into an asymmetric TSP instance.\label{fig:noonbean}}
}
\end{figure*}

The reduction consists of five main steps. First, represent the given instance as a graph. Second, form a metric completion of the graph and remove viewpoints that cannot see any target. Third, convert the overlapping clusters to non-overlapping ones. Fourth, use a modified Noon-Bean reduction~\cite{noon1993efficient} to convert the GTSP instance into a TSP instance. Fifth, add start and finish depots as nodes in the graph. The detailed description of each step is given below. 

\begin{enumerate}
\item 
(Figures~\ref{fig:map_noon}--\ref{fig:Given_Graph})
\label{stepMtoG} Represent $P$ as a graph $G^{g}=(V,E,\hat{C})$. The vertices are the viewpoints $V{\in}P$. Add an edge $(v_i,v_j)\in E$ where $v_i, v_j \in V$ and $v_i$ is visible from $v_j$. The cost of $(v_i,v_j)$ is the Euclidean distance between $v_i$ and $v_j$. Define a set of clusters $\hat{C}=\{C_i,\ldots,C_k\}$ where $C_i=\{v_j \mid v_j \in V$ is visible from target $x_i$\}.
\item 
(Figures~\ref{fig:Given_Graph} to~\ref{fig:comp_orphan_remove})
Complete the graph $G^{g}$ in Step{-}{\ref{stepMtoG}} to give $G^{c}=(V^c,E^c,\hat{C}$). Cost of an edge $(v_i,v_j)$ in $E^{c}$ is equal to the cost of shortest path between $v_i$ and $v_j$ in $G^g$. Then, remove isolated nodes $V^{r}$ that are not present in any cluster in $\hat{C}$. This in turn removes edges  $E^{r}=\{(v_i,v_j) \in E^c \mid \forall v_i \in V^r$\}. Above operations give us $G^{cr}$=($V^{cr},E^{cr},\hat{C}$) where $V^{cr}=V \setminus V^{r}$ and $E^{cr}=E^{c}\setminus E^r$. \footnote{The completion may result in some nodes to be visited multiple times in the final tour.}

\item 
Carry out the first step in Noon-Bean transformation~\cite{noon1993efficient}, the I-N transformation, as follows: $G^{cr}{=}(V^{cr},E^{cr},\hat{C})$  is converted to a graph $G^{in}=(V^{in},E^{in},\hat{C}^{in})$ that has non-intersecting set of clusters.
We go through following steps:
\begin{itemize}
\item 
Create set of nodes $V^{in}=\{v_{i,j} \mid \forall v_{i} \in V^{cr} \text{ and }\forall C_j \ni v_i, C_j \in \hat{C}\}$. 
\item Create set of clusters $\hat{C}^{in}$=$\{C^{in}_j\}$ where ${~}C^{in}_j=\{v_{i,j} \mid \forall i \in \{1,\ldots,n\}, v_{i,j} \in V^{in}\}$.
\item
For all nodes of the form $\{v_{i,j}, v_{i,k}\}{\in}V^{in}$ where $j{\neq}k$, create an \lq{intranode-intercluster}\rq { }edge (${v_{i,j}, v_{i,k}}$) ${\in}E^{in}$ and assign a zero cost to this edge.
\item
For all nodes of the form $\{v_{i,p}, v_{j,q}\}{\in}V^{in}$ create an edge \{($v_{i,p}, v_{j,q}$) ${\in}E^{in}$ where $p{\neq}q$ and $i{\neq}j$\}, and assign the cost of this edge equal to the cost of corresponding edge $(v_i, v_j) \in E^{cr}$. Refer to Figures~\ref{fig:comp_orphan_remove}--\ref{fig:duplicate_node}.
\item
Choose $\alpha$ greater than the cost of any tour in $P$ that visits all targets $x_i$. Add this penalty $\alpha$ to all edges in $E^{in}$ except zero cost edges.\footnote{We use the cost of an arbitrary tour in the original GTSP for this step instead of using sum of the cost of all the edges as used in P2 of Noon-Bean
Method~\cite{noon1993efficient}. This is used to prevent numerical issues with large penalties in concorde.} The exact expression for $\alpha$ will be defined in Step~\ref{botadd}. 

\end{itemize}

\item 
Complete the Noon-Bean reduction by adding intracluster edges, tail-shifting, and imposing another penalty. Convert $G^{in}=(V^{in},E^{in},\hat{C}^{in})$ to a new graph $G^{t}=(V^{t},E^{t}, C^{t})$ as follows:

\begin{itemize}
\item
Copy the vertices, edges and clusters: $V^t=V^{in}$, $E^t=E^{in}$, and $\hat{C}^t=\hat{C}^{in}$. Create edges to connect all nodes in cluster $C^{in}_j \in \hat{C}^{in}$ by an intracluster cycle in any order. That is, $\{v_{i_1,j} \shortrightarrow v_{i_2,j} \shortrightarrow \ldots \shortrightarrow v_{i_p,j} \shortrightarrow v_{i_1,j} \mid \{v_{i_1,j},\ldots,v_{i_p,j}\} \in \hat{C}^{in}_j, \forall j \in \{1,...,k\}\}$. Here $v_{i_1,j}{\shortrightarrow}v_{i_2,j}$ represents an edge ($v_{i_1, j}{,}v_{i_2,j}$). 
To mark these edges we assign a cost of $-1$ to them. Add these edges to $E^t$.

\item
For all intercluster edges $E^{tl} \in E^t$ where $E^{tl}=\{(v_{ip}, v_{jq}) \mid p \neq q, v_{ip} \in C^t_p, v_{jq} \in C^t_q, C^t_p, C^t_q \in  \hat{C}^t\}$, we move the tail of all edges $(v_{ip}, v_{jq} )  \in E^{tl}$ to the previous node $v_{i_{{\mhyphen}1}p}$ in the intracluster cycle defined above. That is, $ (v_{ip}, v_{jq})$ changes to $(v_{i_{{\mhyphen}1}p}, v_{jq})$.

\item
We choose a cost $\beta$ to be greater than cost of any tour in the environment $P$ that visits all targets $x_i$ with a $\alpha$ penalty added to each of the edges in the tour. Add this penalty $\beta$ to all edges in $E^{t}$ except those marked with $-1$ costs. Replace the cost of all the edges with cost $-1$ to a cost of zero.

\end{itemize}

\item \label{botadd}Add robot depots to the graph. Create a new graph $G^b=\{V^b, E^b\}$. For all the depots, add $v_{i}^d$ and $v_{i}^f$ to $V^b$. For the \textsc{InterchangeableDepots} we create two copies of each depot. For \textsc{SameDepot} we create $2m$ copies of the depot, and for \textsc{SameFinishDepot} we create $m$ copies of the finish depot. Create zero cost edges $E^b$ between all pairs of depot nodes. We get the final directed TSP graph $G^f=(V^f,E^f)$. Here $V^f = V^b \cup V^t$ and $E^f = E^b \cup E^t$. We also add the following edges to $E^f$:
\begin{itemize}
\item
Create depot outgoing edges $E^{o}=\{(v_{i}^d,v_{jp}) \mid \forall v_{i}^d \in V^b,\forall  v_{jp} \in V^t\}$. Assign a cost $\gamma{_i}$ to all edges $(v_{i}^d, v_{jp})$ equal to the cost of the shortest path connecting $v_i^d$ to $v_{jp}$ in $P$ restricted to $V$. Also, add a penalty $\alpha + \beta$ to these edges. Add $E^o$ to $E^f$.
\item
Create tail shifted depot incoming edges $E^{i} =\{( v_{jp}, v_{i}^f \mid \forall v_{jp} \in V^t, \forall v_{i}^f \in V^b\}$. Also, add the cost $\gamma{_i}$ to $(v_{jp}, v_{i}^f)$ as above but without penalty. Then we move tail of all the edges $(v_{jp}, v_{i}^f) \in E^{i} $ to the previous node $v_{j_{{\mhyphen}1}p}$ in the intracluster cycle defined in Step 4 above. That is, $(v_{jp}, v_{i}^f)$ changes to $(v_{j_{{\mhyphen}1}p}, v_{i}^f)$. Add $E^i$ to $E^f$. 
\item
Define $\alpha$=2(cost of MST in $G^{cr}$)+$2m|$cost in $E^i|_{max}$. Define
 $\beta$=$2{\alpha}$($1+$($\#$edges in MST in $G^{cr}$))+$2m$($1+{\alpha}$).
\end{itemize}
\item This gives us a directed TSP input instance. This can be converted to an undirected TSP using transformation used by Karp~\cite{karp1972reducibility}.
\end{enumerate}

The correctness of the algorithm follows from the correctness of the original Noon-Bean reduction~\cite{noon1993efficient}. The major change to the reduction is the addition of the complete graph between the depot vertices, $V_b$. The only incoming edges to the start depots, $v_i^d$, are from the finish depot vertices, $v_j^d$. Similarly, the only outgoing edges from the finish depot vertices are to the start depot vertices. Consequently, whenever the optimal TSP tour visits a finish depot vertex it must take a zero cost edge to a start depot vertex, from which it may either to a node in $V^t$ or to another finish depot vertex. Therefore, the TSP tour visits an alternating sequence of start and finish depot vertices with possibly non-zero viewpoints (\ie, $V^t$ vertices) in between. We can therefore partition the TSP tour into $m$ subtours from start depot to finish depots. This gives us paths for the $m$ robots. One or more of these subtours may be empty, in which case the optimal solution uses fewer than $m$ robots. This can happen since the algorithm minimizes the sum of the path lengths, and not the maximum path length of $m$ robots.

\subsection{Computational Studies} \label{sec:compstudies}
We implemented the algorithm described in the previous subsection. Our implementation is available online at \url{https://github.com/raaslab/watchman_route} and uses the Noon-Bean implementation from Mathew et al.~\cite{mathew2015multirobot} and VisiLibity library~\cite{visilibity08}. Figure~\ref{fig:noonbean} shows a 2D instance solved using this algorithm. 

The penalties added to the edges can cause their cost to become large enough to run into numerical overflow issues. In our experiments, we encountered instances where the penalty resulted in the edge costs becoming large than what can be represented with the data structure used by concorde. In such a case, we can use the reduction given by Lien and Ma~\cite{Lien1993} which does not require the addition of any penalty. However, their reduction triples the number of nodes in the TSP as compared to the Noon-Bean transformation. This results in larger instances and slower running times. 
Our single robot implementation based on the Lien-Ma transformation is also available online.

An instance with 15 targets and 30 candidate viewpoints for one robot took 41s secs to solve using concorde, where as the same instance took 536s to solve directly using the Integer Linear Programming solver in MATLAB. We use an iterative implementation~\cite{ILPTSP} to find the tour in MATLAB using the ILP function since specifying the full problem directly becomes too large to hold in memory.

\begin{table}
\centering
\begin{tabular}{ |l|c|c|c| } 
 \hline
 Reduction & Solver & Problem Size & Time (secs)\\
 \hline
 Lien-Ma & concorde (DFS) & $|V| = 20$, $|X| = 10$ & 159.97\\
 Lien-Ma & concorde (BFS) & $|V| = 20$, $|X| = 10$ & 225\\
--- & MATLAB ILP & $|V| = 20$, $|X| = 10$ & 40\\
 Noon-Bean & concorde (BFS) & $|V| = 20$, $|X| = 10$ & 2.7\\
 \hline
\end{tabular}
\caption{Time required to solve a representative problem with various implementations.\label{tab:times}}
\end{table}


Table~\ref{tab:times} gives the times required to solve some representative problems using the Lien-Ma and Noon-Bean reductions with concorde and MATLAB. As expected, the Noon-Bean reduction along with the concorde solver is fastest among all options. The Lien-Ma reduction for an instance with 15 targets and 30 candidate viewpoints could not be solved in 12 hours using concorde. An instance with $|V|=63$ and $|X|=15$ could not be solved using MATLAB's ILP function in more than 16 hrs of computation. The same instance took 2411 secs with concorde and Noon-Bean reduction. An instance with $|V|=63$, $|X|=15$, and three robots was solved in 1599 secs with concorde and Noon-Bean.


\begin{figure}[htb]
\centering
\includegraphics[angle=90,width=0.3\textwidth]{./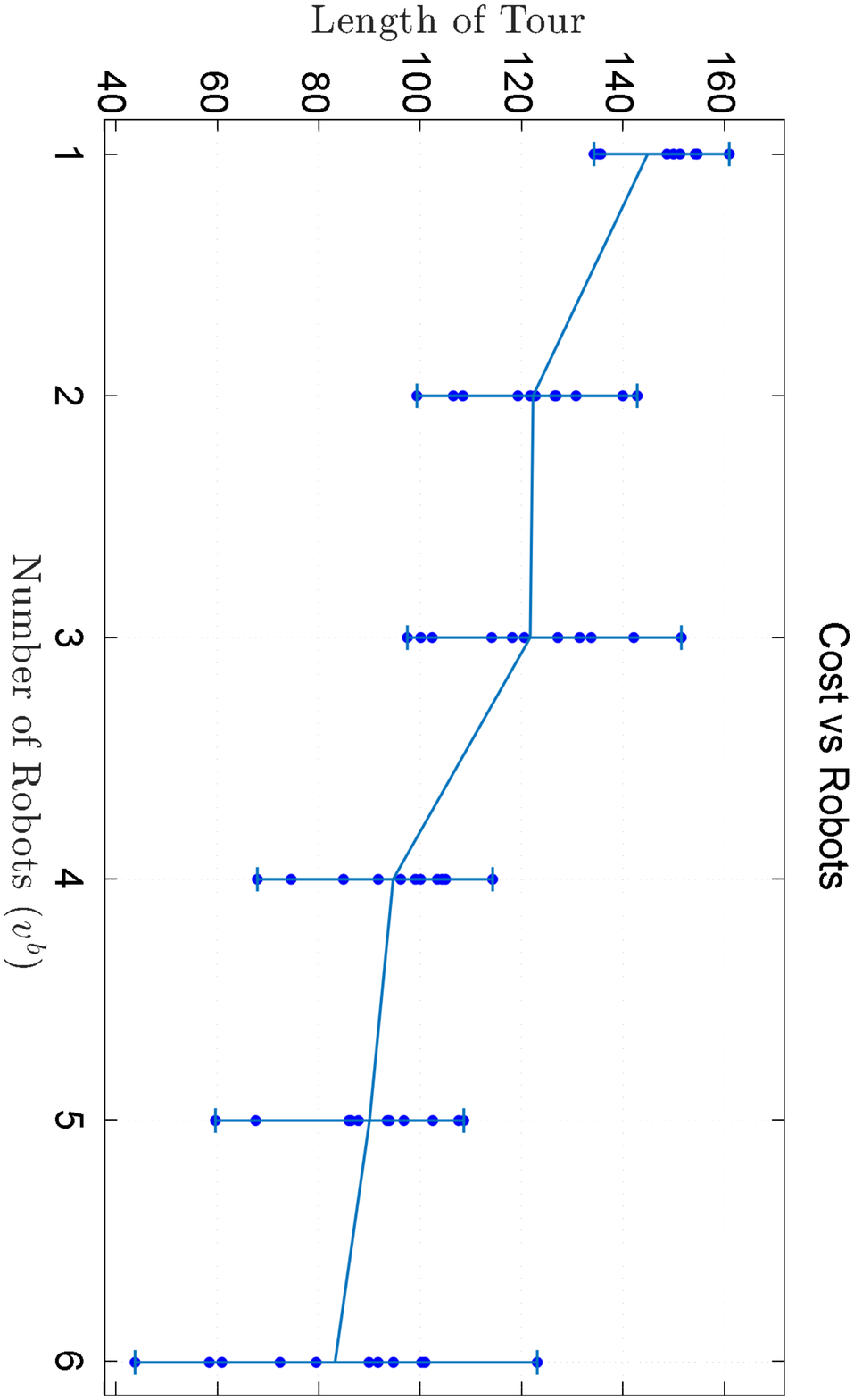}
\includegraphics[angle=90,width=0.3\textwidth]{./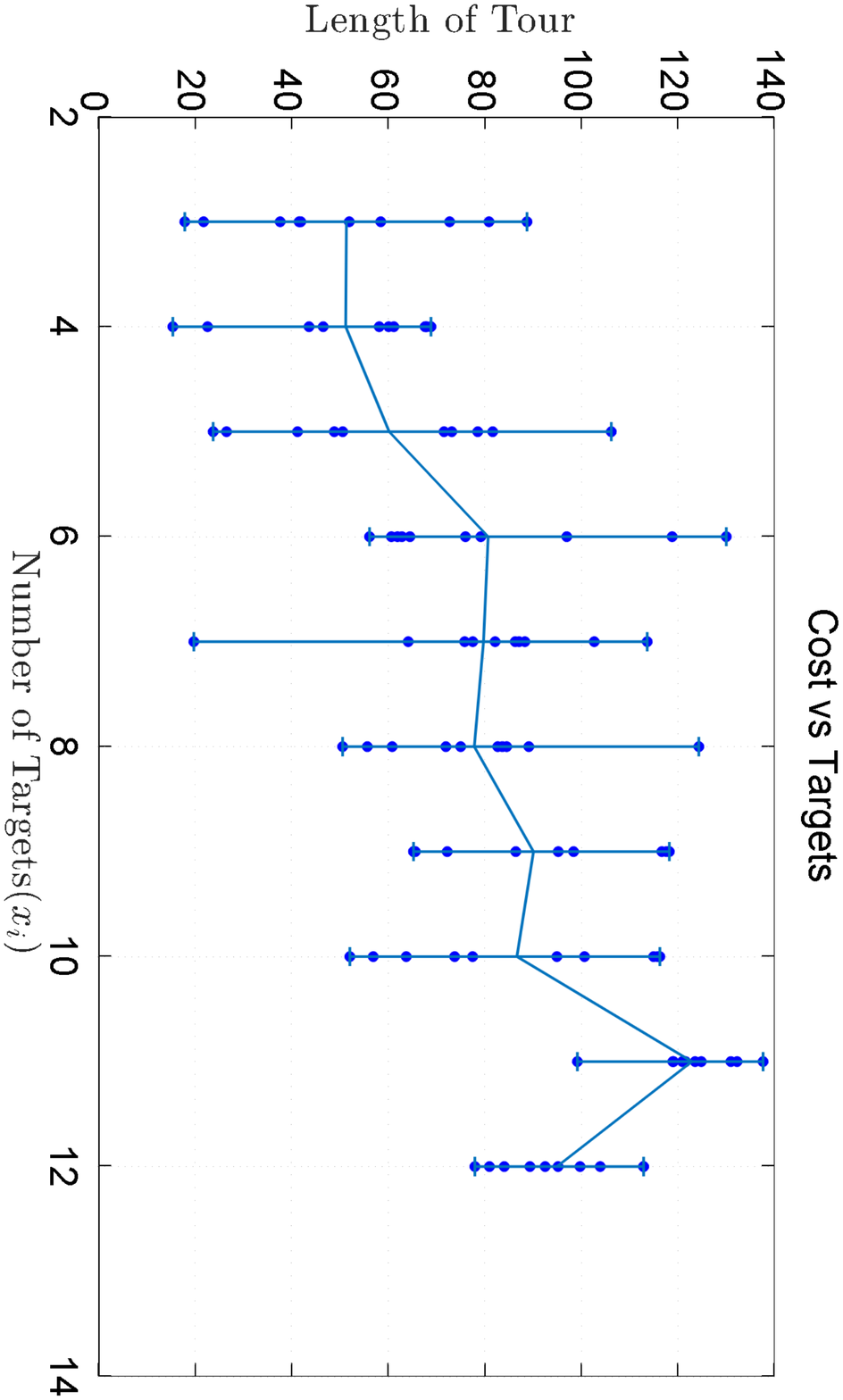}
\includegraphics[angle=90,width=0.3\textwidth]{./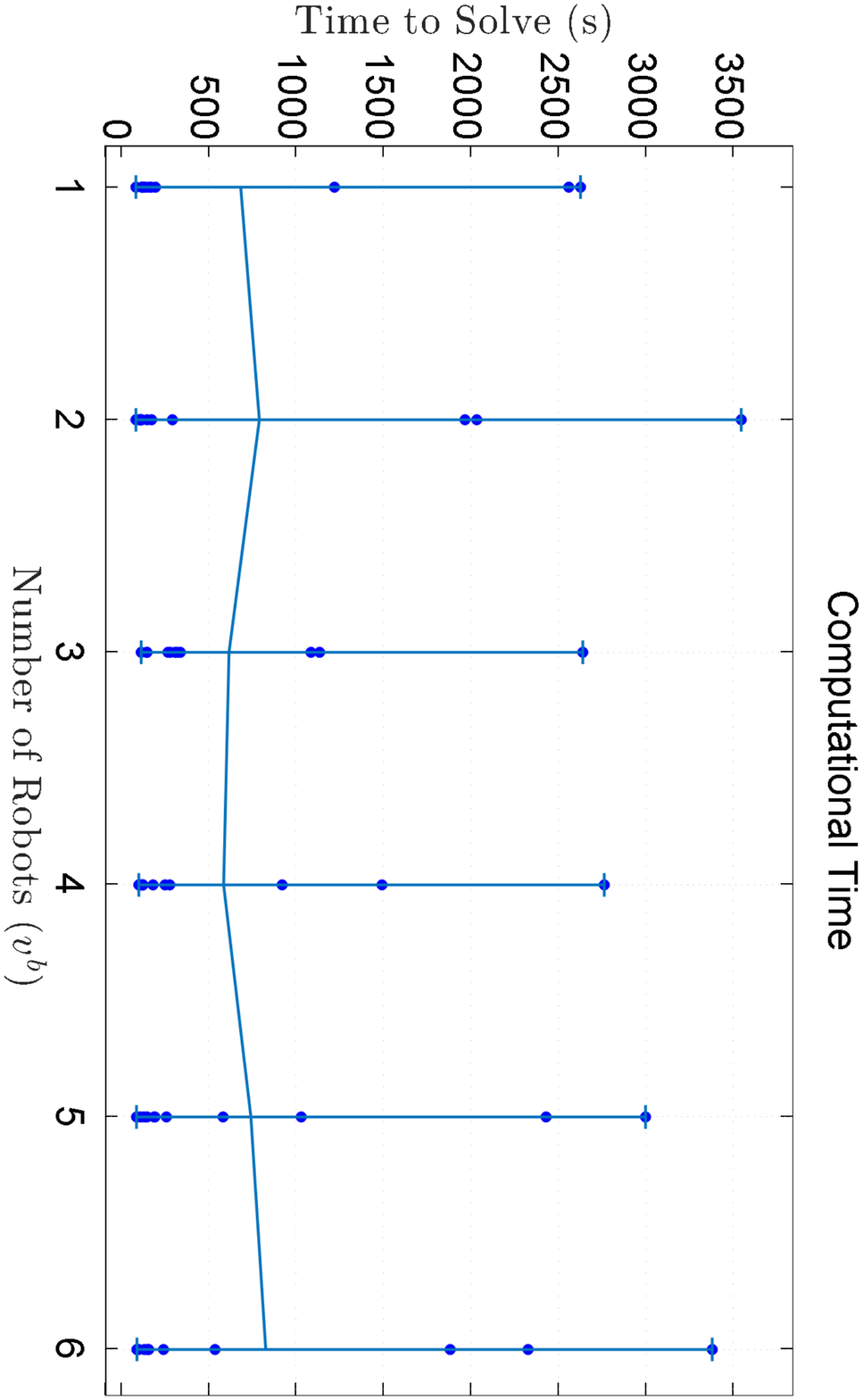}
\includegraphics[angle=90,width=0.3\textwidth]{./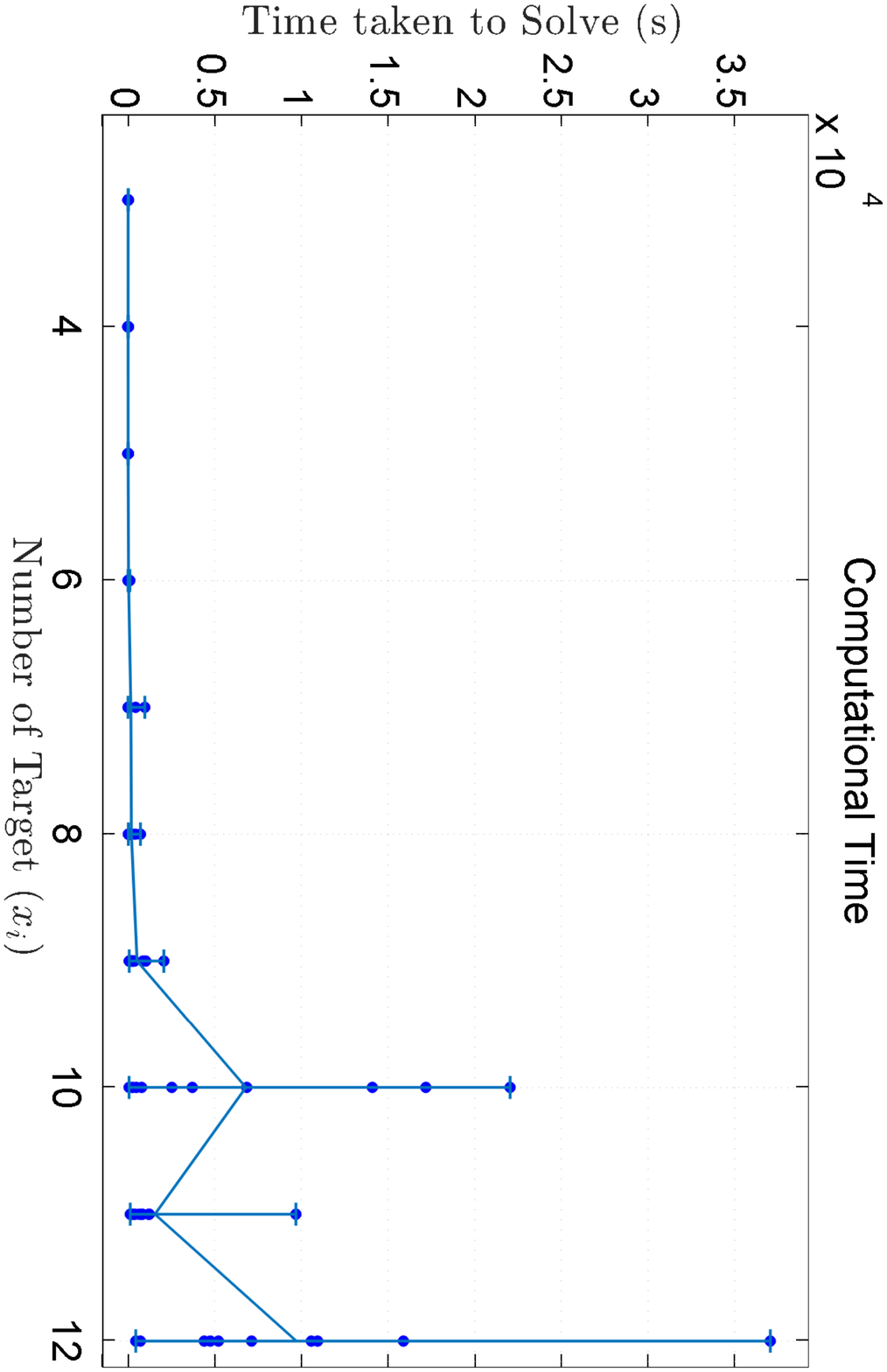}
\caption{(a) Cost of the optimal solution with 10 targets as a function of the number of robots. (b) Cost of the optimal solution with 3 robots as a function of the number of targets. (c) \& (d) Computational time for figures (a) and (b), respectively. The dots show the costs/times of individual random trials and the curve shows the mean of all trials. The robots and target positions were randomly drawn in the environment shown in Figure~\ref{fig:gazebo}.\label{fig:simgtsp}}
\end{figure}

Figure~\ref{fig:simgtsp}(a) shows the effect of varying the number of robots on the optimal cost (sum of path lengths). 10 target locations are randomly generated for each trial in the environment shown in Figure~\ref{fig:gazebo}. Figure~\ref{fig:simgtsp}(b) shows the effect of varying the number of targets. The number of robots were fixed to 3. Figures~\ref{fig:simgtsp}(c)--(d) show the computation time required for finding the solution using the Noon-Bean reduction with concorde (BFS) solver, as a function of the number of robots and the targets. 

The resulting algorithm was also tested in the Gazebo simulation environment (Figure~\ref{fig:gazebo}) using two Pioneer 3DX robots fitted with a limited field-of-view angle camera. The robots emulate an omni-directional camera by rotating in place whenever they reach a new vertex. Table~\ref{tab:gazebo} shows the comparison between the lengths of the tours on the input graph and the actual distance traveled by the robots in the Gazebo simulation environment. The actual distances are shorter since the robot is not restricted to move on the input graph in the polygonal environment.

\begin{table}
\begin{tabular}{|l|c|c|}
\hline
 & Length of Tour Computed & Actual Distance Traveled \\
 \hline
 Robot 1 & 78.06 & 71.74 \\
 Robot 2 & 50.04 & 43.63\\
 \hline
\end{tabular}
\caption{Comparison of the lengths of the tour computed and the actual distance traveled by the robots in the Gazebo simulation environment for Figure~\ref{fig:gazebo}.\label{tab:gazebo}}
\end{table}

\begin{figure}[htb]
\centering{
\label{fig:rviz_fig}\includegraphics[width=0.8\columnwidth]{./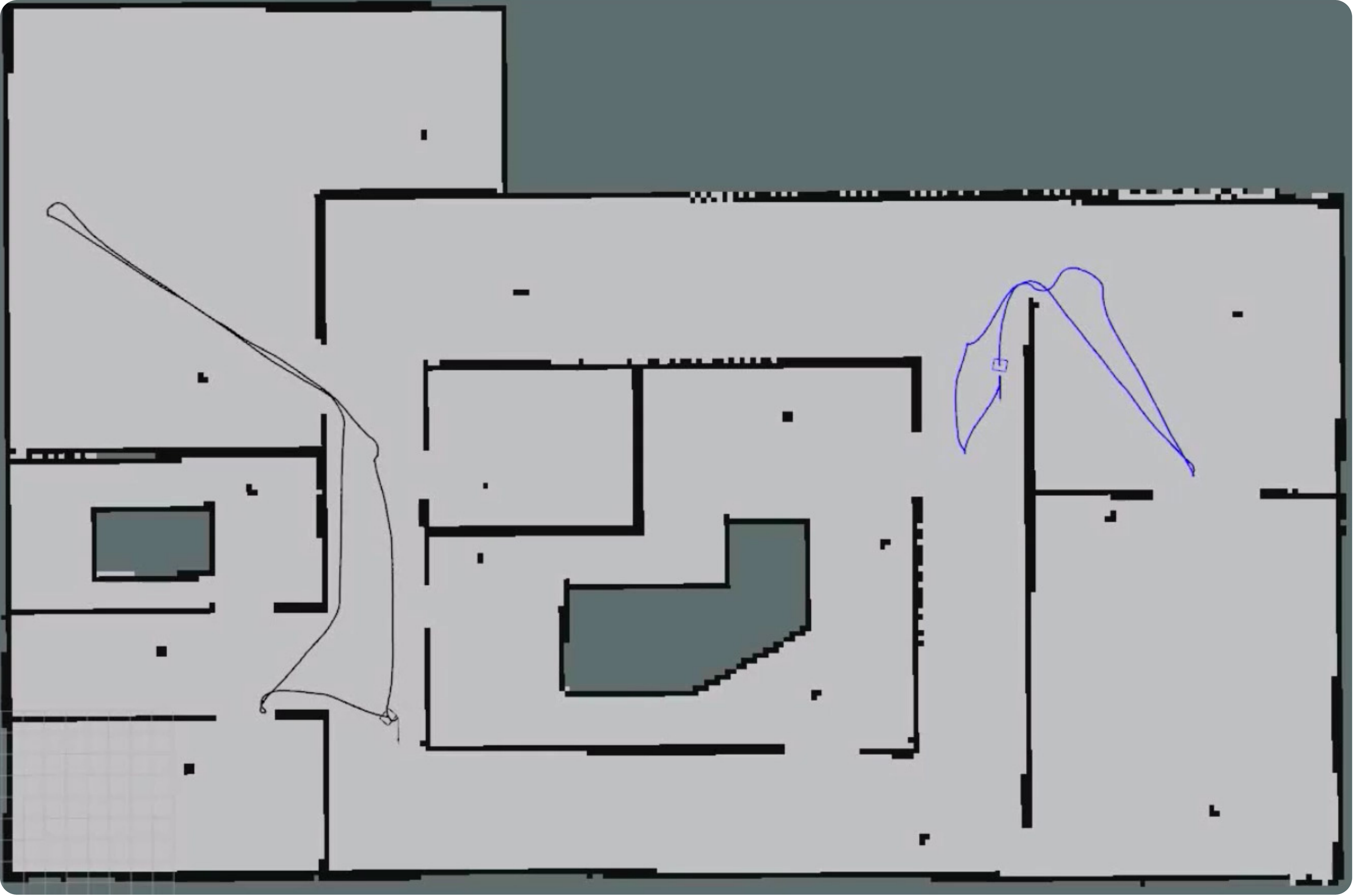}
\caption{Two robot simulation in Gazebo. A video of the simulation is available online: \protect\url{https://youtu.be/lvUyoFZBqxA}. The targets are marked as black dots.\label{fig:gazebo}}
}

\end{figure}


\section{Discussion and Conclusion} \label{sec:conc}
 
Our approach in this paper was to formulate a geometric version of the persistent monitoring problem with a more abstract but generalizable specification. Even with this formulation, the problem turns out to be challenging. 

The analysis presented is based on the key property of
chain-visibility which is satisfied by limited classes of
environments. In particular, for Problem~\ref{prob:street} we require
the environment to be a street polygon. One approach of extending our
algorithm for general environments would be to first decompose the
environment into street polygons, and then apply our algorithm
separately in each component. While algorithms for decomposition into
street polygons are not known, there is an optimal algorithm for
decomposing a polygon without holes into the fewest number of monotone
subpolygons~\cite{keil1985decomposing}. The class of monotone polygons
are included in the class of street polygons, and thus can be used as
valid inputs to our algorithm. Alternatively, human operators may be able to provide this decomposition. However, in both cases, the approximation guarantee will in general not hold. 

%
The main open problem from a theoretical standpoint is whether there
are polygons for which we can compute optimal solutions or
constant-factor approximations for watchmen routes without
requiring a candidate input curve. One possible approach would be to
determine settings in which we can first compute a discrete set of
viewpoints and then find $m$ paths to visit them. There are existing
algorithms for finding $m$ paths of minimum maximum length to visit a
set of points (see e.g.,~\cite{arkin2006approximations}). The key
property would be to show that a tour restricted to the discrete set
of viewpoints thus computed is at most a constant factor away from an
optimal tour. The approximation algorithm presented in
Section~\ref{sec:street} follows this principle. Investigating similar
results for richer environments is part of our ongoing work. 

The algorithm in Section~\ref{sec:general} finds the optimal solution by reducing it to GTSP. However, this is valid when the cost function is the sum of the travel times for all the robots. Extending this approach to account for measurement time, and minimizing the makespan remains an open problem.

\bibliographystyle{IEEEtran}
\bibliography{IEEEabrv,refs}

\appendix
\section*{Proof of Lemma~\ref{lem:dynprogorder}}
\begin{proof}
Consider the case when $v$ is the left endpoint of a path $\Pi_i$ in
an optimal solution. We will prove by contradiction. If $v$ is not
also the right endpoint of some $C_x$, then find the first right
endpoint, say $v'$, of some $C_x$ that is to the right of $v$ along
$C$. All points in $X$ visible from $v$ are also visible from
$v'$. Thus, we can let $v'$ be the new left endpoint of $\Pi_i^*$ to
give a valid solution of lesser length, i.e. lesser cost, which is a
contradiction. The case for the right endpoint is symmetrical.


\end{proof}

\section*{Proof of Lemma~\ref{lem:optsinglepath}}
\begin{proof}
We first verify that all targets in $X'$ will be covered by the
algorithm (and thus the algorithm terminates). Suppose not. Let $x$ be
a target that is not covered. By definition of $X'$, $C_x$ intersects
with $\Pi_i$. Let $x_l$ and $x_r$ be the left and right endpoints of
$C_x$. If $x_l$ is to the left of $i$, then $x$ is visible from $i$
and will be marked covered. If $x_r$ is to the right of $j$, then $x$
is visible from $j$ and will be marked covered. Hence, $x_l$ and $x_r$
lie between $i$ and $j$. 

Consider the closest viewpoint in $V_i$ lying to the left of $x_l$,
say $v$ (we know at least one such viewpoint exists, namely $i$). Let
$v'$ be the first viewpoint in $V$ to the right of $v$ (we know at
least one such viewpoint exists, namely $j$). Now $v'$ cannot be to
the left of $x_l$, else $v$ is not the closest viewpoint to the left
of $x_l$. Similarly, $v'$ cannot be to the right of $x_r$ since $v'$
will not satisfy the condition in Line~\ref{line:q} in
Algorithm~\ref{algo:optsinglepath}. This leaves the case where $v'$ is
between $x_l$ and $x_r$, in which case $x$ is visible from a viewpoint
in $V_i$.

Next, we verify that the optimal set of viewpoints and cost is
correctly computed. The length of $\Pi_i$ is fixed since $i$ and $j$
are given as input. Let $X''$ be the subset of $X'$ such that any
$x\in X''$ is not visible from either $i$ or $j$. It remains to show
that $|V_i\setminus\{i,j\}|$ is the least number of measurements
required to cover $X''$. Denote the viewpoints in
$V_i\setminus\{i,j\}$ by $v_1,\ldots,v_n$. Along with $i$ and $j$,
this defines $n+1$ partitions: $[v_0:=i,v_1],
[v_1,v_2],\ldots,[v_n,v_{n+1}:=j]$. 

For contradiction, suppose there is a $V'=\{v_i'\}$ with $n-1$
viewpoints that cover $X''$. Then, there must exist at least one
$[v_i,v_{i+1}]$ partition that does not contain any $v_j'$. From
Line~\ref{line:q}, this implies that there is some point $x$ whose
interval lies completely between two consecutive viewpoints in
$V'$. Thus, $V'$ does not cover all elements in $X$ which is a
contradiction.

\end{proof}

\end{document}